\def\eqref#1{equation~\ref{#1}}
\def\1{\bm{1}}
\DeclareMathAlphabet{\mathsfit}{\encodingdefault}{\sfdefault}{m}{sl}
\SetMathAlphabet{\mathsfit}{bold}{\encodingdefault}{\sfdefault}{bx}{n}
\newtheorem{theorem}{Theorem}
\newtheorem{lemma}{Lemma}
\newtheorem{definition}{Definition} 
\newtheorem{example}{Example}
\title{
\textit{HAMMER}: Hamiltonian Curiosity Augmented Large Language Model Reinforcement
}
\author{
\textbf{Ming Yang}$^{1,2}$\thanks{Equal contribution. 
Ming Yang: idea, code, experiments, and writing. 
Xiaofan Li: experiment setup and writing. 
Work done during internship at Ant Group.} \quad
\textbf{Xiaofan Li}$^{3,2*}$ \quad
\textbf{Zhiyuan Ma}$^{2}$ \quad
\textbf{Dengliang Shi}$^{2}$ \\
\textbf{Jintao Du}$^{2}$ \quad
\textbf{Yu Cheng}$^{2}$ \quad
\textbf{Weiguo Zheng}$^{1}$\thanks{Corresponding author.} \\
\\
$^1$Fudan University\quad
$^2$Tiansuan Lab, Ant Group Co., Ltd.\quad
$^3$East China Normal University\\
\texttt{\small yangm24@m.edu.cn, funzi@stu.ecnu.edu.cn} \\
\texttt{\small \{mazhiyuan.mzy,dengliang.sdl,lingke.djt,cy122623\}@antgroup.com} \\
\texttt{\small zhengweiguo@fudan.edu.cn}
}
\begin{document}

\maketitle

% \begin{abstract}
% Recent curriculum reinforcement learning for large language models (LLMs) typically rely on difficulty-based annotations for data filtering and ordering. However, such methods suffer from costly \textit{pass@k} inference or manual labeling, and human-perceived difficulty level does not necessarily align with the model’s learning efficiency. 
% We propose a novel schema, namely 
% \textit{Hamiltonian curiosity augmented large language model reinforcement (HAMMER)},
% that transfers diversity metrics, commonly used in dataset evaluation, into the dynamic reinforcement learning procedure, where training samples are ordered via a minimum-semantic Hamiltonian path. 
% From a theoretical perspective of generalization bounds, diversity-driven ordering facilitates stable convergence.
% Empirical evaluations indicate that \textit{HAMMER} stimulates model ``curiosity'' and consistently achieves a 3\% to 4\% average accuracy gain across diverse inference benchmark datasets.
% \end{abstract}

\vspace{-4mm}
\begin{abstract}
\vspace{-1mm}
Recent curriculum reinforcement learning for large language models (LLMs) typically rely on difficulty-based annotations for data filtering and ordering. 
% \shancun{
However, such methods suffer from local optimization, where continual training on simple samples in the early steps can cause the policy to lose its exploration.
% } 
We propose a novel schema, namely 
\textit{Hamiltonian curiosity augmented large language model reinforcement (HAMMER)},
that transfers diversity metrics, commonly used in dataset evaluation, into the dynamic reinforcement learning procedure, where training samples are ordered via a minimum-semantic Hamiltonian path 
% \shancun{
making the initial training retrain more exploration.
% } 
From a theoretical perspective of generalization bounds, diversity-driven ordering facilitates stable convergence.
Empirical evaluations indicate that \textit{HAMMER} stimulates model ``curiosity'' and consistently achieves a 3\% to 4\% average accuracy gain across diverse inference benchmark.
\end{abstract}

\vspace{-4mm}
\section{Introduction}
\vspace{-2mm}
Recently, Reinforcement Learning with Verifiable Rewards (RLVR) has emerged as a powerful tool for enhancing complex reasoning in large language models (LLMs), significantly boosting their reasoning capabilities \citep{ReFT_44_44,OpenRFT_58_58,tulu_24_24}. During training, LLMs generate diverse responses to prompts and receive corresponding rewards \citep{seed1.5-vl_17_17,deepseekmath_grpo,kimi-1.5}. By learning from outcome reward, these models develop the ability to produce more comprehensive reasoning traces \citep{chen2025empiricalstudyelicitingimproving_9_9,deepseek-r1}, leading to improved performance on downstream tasks. The success of large reasoning models (e.g., OpenAI-o1 \citep{OpenAI-o1} and DeepSeek-R1 \citep{deepseek-r1}) demonstrates that RLVR effectively expands the capabilities of LLMs.

Group Relative Policy Optimization (GRPO) proposed by \citet{deepseekmath_grpo} is a key RLVR algorithm that extends Proximal Policy Optimization (PPO) proposed by \citet{ppo}, by sampling groups of responses to estimate group-relative advantages. Given reward $r$, group size $G$, policy ratio $\rho_t=\frac{\pi_\theta(o_t|q,o_{<t})}{\pi_{\theta_\text{old}}(o_t|q,o_{<t})}$ with bound $\varepsilon$, GRPO's objective function is
\vspace{-1mm}
$$
\mathcal{J}
(\theta) = 
    \mathbb{E}
    \left\{
    \frac{1}{G}
    \sum_{i=1}^G
        \frac{1}{|o_i|}\sum_{t=1}^{|o_i|}\min \left(
        \rho_{
        i,t} \hat{A}_{
        i,t}, clip\left(
            \rho_{
            i,t}, 1-\varepsilon, 1+\varepsilon
        \right)\hat{A}_{
        i,t}
    \right) - \beta \mathbb{D}_{
    \text{KL}}
    \right\},
$$
where KL divergence to reference policy is $\mathbb{D}_{\text{KL}}$ with penalty factor $\beta$. 
The normalized advantage is $\hat{A}_{i,t} = \frac{r_{i,t} - \text{mean}(r_{i,t})}{\text{std}(r_{i,t})}$. The expectation $\mathbb{E}$ follows $(q,a)\sim \mathcal{X}$ and $\{o_i\}_{i=1}^G \sim \pi_\theta(\cdot|q)$.
Subsequently, variants of GRPO, like Decoupled Clip and Dynamic sAmpling Policy Optimization (DAPO) \citep{dapo}, were proposed to optimize the GRPO.

% \shancun{[shancun comment] 
Beyond optimization algorithms, some works explore \textit{data-centric} strategies to improve efficiency. Inspired by human education, 
Curriculum Learning (CL) has been applied to LLM reinforcement \citep{Bengio2009CurriculumL,CL_Survey}, most studies rely on difficulty-based sequencing of Chain-of-Thought (CoT) annotations \citep{CL-E2H,qiu2025wisdom_E2H}. Such approaches typically mimic ``easy-to-hard'' progressions but require costly difficulty assessments, often via \textit{pass@k} testing or advanced-model labeling (e.g., OpenAI-o1 \citep{OpenAI-o1}, Deepseek-R1 \citep{deepseek-r1}) and suffer from local optimization. We consider adopting the diversity order, but diversity-based classical methods such as Coreset Selection (CS) \citep{_33_33_,_60_60_,_38_38_,task_cs_ft} are all sampling methods whose reduction-oriented design leads to performance bottlenecks \citep{cs_bottleneck}. 
% }

% Beyond optimization algorithms, some works explore \textit{data-centric} strategies to improve efficiency. Coreset selection (CS) accelerates training by selecting compact, representative subsets \citep{_33_33_,_58_58_,_60_60_,_65_65_,_15_15_,_38_38_,task_cs_ft}, but its reduction-oriented design leads to performance bottlenecks \citep{cs_bottleneck}. 
% % \red{Our Hamiltonian ordering instead adopts a curriculum learning (CL) perspective, leveraging semantic diversity to guide training and exposing models to varied samples early for more stable, exploratory learning.} 
% While Curriculum Learning (CL), inspired by human education, has been applied to LLM reinforcement \citep{Bengio2009CurriculumL,CL_Survey}, most studies rely on difficulty-based sequencing of Chain-of-Thought (CoT) annotations \citep{CL-E2H,qiu2025wisdom_E2H}. Such approaches typically mimic ``easy-to-hard'' progressions but require costly difficulty assessments, often via \textit{pass@k} testing or advanced-model labeling (e.g., OpenAI-o1 \citep{OpenAI-o1}, Deepseek-R1 \citep{deepseek-r1}).

\vspace{-2mm}
\subsection{Motivation}
\vspace{-2mm}

Reinforcement learning with LLMs often exhibits high variance and unstable convergence, particularly in the early stages of training \citep{passk_training}. Traditional curriculum learning \citep{CL_Survey} typically follows an ``easy-to-hard'' strategy \citep{CL-E2H,qiu2025wisdom_E2H}. 
% \blue{
However, in RLVR, such naive difficulty-based training often fails: the model quickly exploits easy samples for consistent rewards, while harder ones incur repeated penalties. This early imbalance discourages exploration, leading the policy to overfit to easy problems early and become trapped in local optima, ultimately slowing convergence.
Our ablation study confirms this inefficiency (Figure~\ref{fig: e2h}).  
% }
To improve training, we propose a different perspective: \emph{diversity can effectively guide RLVR}. Presenting semantically diverse samples early allows the model to explore the input space more thoroughly, reduce the generalization gap, and accelerate convergence, as theoretically justified in Section~\ref{sec: theoretical analysis}.  
In short, we transform diversity from a static dataset property to an active principle for curriculum design in LLM reinforcement learning.

\vspace{-2mm}
\subsection{Our Approach and Contributions}
\vspace{-2mm}
% \blue{
In this paper, we present a novel and effective schema, \textit{\underline{H}amiltonian Curiosity \underline{A}ug\underline{M}ented Large Language \underline{M}od\underline{E}l \underline{R}einforcement (HAMMER)}, 
which transfers diversity metrics from large model data evaluation into the dynamic process of reinforcement learning. 
The schema consists of two main components. 
First, it leverages the backbone LLM to obtain semantic similarity embeddings. 
Compared to external embedding models, this approach generates sentence representations that are more consistent with the model’s internal training dynamics. 
Second, the embeddings are used to compute pairwise semantic similarity, and a Hamiltonian Curiosity Order is constructed to define a curriculum learning sequence. 
This process can be viewed as solving a Hamiltonian cycle that minimizes semantic similarity, enabling the model to greedily encounter the most diverse samples early in training. 
As a result, the model can achieve partial convergence on some samples early, improving the stability of reinforcement learning.
% }

From a learning-theoretic perspective, we derive the generalization bound of \textit{HAMMER}. 
Theorem~\ref{thm: muti-stages training support condition} shows that early diverse training does not compromise the optimal policy, 
while Theorem~\ref{thm: local risk optimization} demonstrates that diverse subsets effectively tighten the generalization bound. 
% Moreover, Theorem~\ref{thm: min DCS iff min similarity} establishes that \textit{HAMMER}’s semantic diversity path optimization shares the same objective as maximizing the dataset diversity measure, which is the overall likelihood of large dissimilarities between a sample and the remaining dataset (formally defined in Equation~\ref{eq: definition of mu_DCS}).
Moreover, Theorem~\ref{thm: min DCS iff min similarity} establishes that optimizing \textit{HAMMER}’s semantic diversity path is equivalent to maximizing the dataset diversity score, which captures the overall likelihood that a sample substantially differs from the rest of the dataset (see the formal definition in Equation~\ref{eq: definition of mu_DCS}).
Extensive experiments validate our theoretical analysis and confirm its alignment with empirical results.

\textbf{Contributions}. In summary, this paper makes the following contributions:  
\vspace{-2mm}
\begin{itemize}[leftmargin=*]
    \item 
    % To enhance the stability and sample efficiency of LLM reinforcement learning, we integrate curriculum learning with sample diversity %(commonly used in data evaluation) 
    % and propose a novel schema, \textit{Hamiltonian Curiosity Augmented Large Language Model Reinforcement (HAMMER)}. \textit{HAMMER} leverages the minimum semantic Hamiltonian path, termed the \textit{Hamiltonian Curiosity Order}, to stimulate the model’s ``curiosity'' in the early stage, thereby accelerating convergence and optimization. 
    To improve the stability and sample efficiency of reinforcement learning, we introduce \textit{HAMMER}, a novel curriculum learning schema. \textit{HAMMER} structures the training sequence using a minimum semantic Hamiltonian path, termed the \textit{Hamiltonian Curiosity Order}. This path stimulates exploratory behavior (``curiosity'') in early training phases, leading to accelerated convergence and more stable optimization.
    
    \item %We design an efficient heuristic algorithm for computing the Hamiltonian Curiosity Order. This sample reordering approach achieves comparable benefits to expensive difficulty-based curriculum RL, but at a fraction of the cost.  
    We develop an efficient heuristic algorithm to compute the Hamiltonian Curiosity Order. This sample reordering approach delivers performance gains comparable to computationally expensive difficulty-based curriculum reinforcement learning, but with significantly lower overhead.
    
    \item From a theoretical perspective, we prove that \textit{HAMMER} preserves the optimal policy while promoting convergence by tightening the generalization bound with a small set of diverse samples. 
    Moreover, we show that the minimum semantic similarity cycle in HAMMER aligns with maximizing the dataset diversity score.
    % \red{diversity criterion $\mu_{\text{DCS}}$. }
    \item %Extensive experiments demonstrate that integrating \textit{HAMMER} into existing RLVR algorithms (e.g., DAPO and GRPO) consistently improves sample efficiency and typically yields a $3$--$4\%$ accuracy gain.  
    Extensive experiments demonstrate that integrating \textit{HAMMER} into RLVR algorithms like DAPO and GRPO consistently enhances sample efficiency and 
    %delivers a steady accuracy improvement of 3–4%.
    achieves accuracy gains of 3–4\%.
\end{itemize}

\section{Method Overview}
\vspace{-1mm}
In this paper, we propose \textit{Hamiltonian Curiosity Augmented Reinforcement (HAMMER)}, a novel training schema for LLMs comprising two key components.
\begin{figure}[t]
  \centering
  % \vspace{-6mm}
    \includegraphics[width=\linewidth]{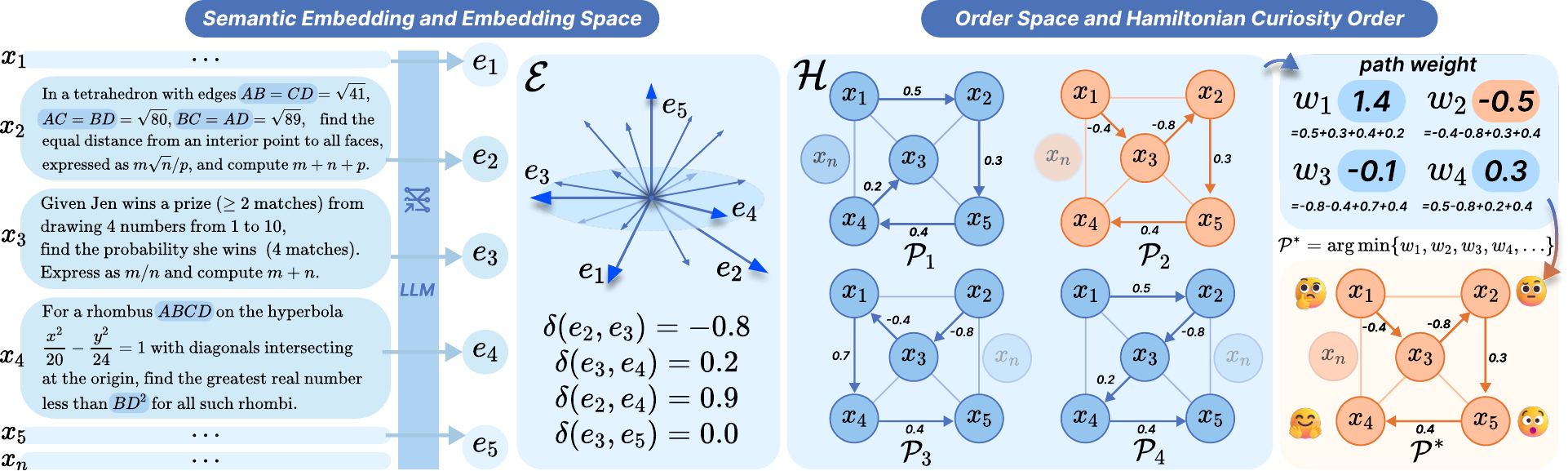}
    \vspace{-4mm}
    \caption{Overview of \textit{HAMMER}. Given dataset 
    $\mathcal{X}=\{x_i\}_{i=1}^n$, forward propagation through the backbone model yields sentence embeddings $\{e_i\}_{i=1}^n$, where similar ones are closer in embedding space $\mathcal{E}$ with larger similarity $\delta$ (e.g., $x_2, x_4$). Pairwise similarities form $\{\delta(e_i, e_j)\}_{n \times n}$, a complete graph. 
    All paths of the graph consists the Order Space $\mathcal{H}$.
    The path $\mathcal{P}^* \in \mathcal{H}$ with minimum similarity provides the \textit{Hamiltonian Curiosity Order}.}
    \label{fig: overview}
  % \vspace{-3mm}
\end{figure}

\vspace{-3mm}
\paragraph{Semantic Embedding} 
Sentence embeddings are obtained directly from the forward propagation of the backbone LLM, ensuring that the representation space reflects the model's own understanding of input text. 
Unlike embeddings derived from external models, this approach leverages the latent semantic structure captured by the backbone LLM itself, thereby reducing potential mismatch between training signals and the model's internal representation \citep{llm2vec}. 
Pairwise similarities between sentence embeddings define the embedding space, which can be represented as a similarity matrix $M=\{\delta(e_i, e_j)\}_{n \times n}$. 

\vspace{-3mm}
\paragraph{Hamiltonian Curiosity Order} 
Semantic similarity matrix $M$ can be viewed as a complete graph over $n$ samples, where every edge weight corresponds to semantic proximity. 
All possible sample orderings in this graph form the order space, containing $n!$ distinct paths. 
Order space provides a rich combinatorial structure for exploring different training sequences. 
Within the order space, we compute the Hamiltonian cycle of minimum cumulative similarity by Algorithm~\ref{alg: hamilton cycle}, which we call the \textit{Hamiltonian Curiosity Order}. 
This ordering intentionally prioritizes transitions across semantically dissimilar samples, thereby fostering a sense of ``curiosity'' in the early stages of reinforcement learning. 
Such curiosity-based ordering prevents premature overfitting to narrow semantic clusters, exposes the model to a broader spectrum of knowledge, and encourages more balanced exploration. 
As training proceeds, this induced diversity in trajectories helps smooth optimization dynamics and accelerates convergence, ultimately improving both stability and generalization of the backbone LLM under reinforcement learning.
We theoretically justify this intuition in Section~\ref{sec: theoretical analysis}. 
Specifically, Theorem~\ref{thm: muti-stages training support condition} shows that diverse subsets preserve the optimal policy during reinforcement learning. 
Theorem~\ref{thm: local risk optimization} further demonstrates that such subsets greedily minimize the generalization error bound. 
Finally, Theorem~\ref{thm: min DCS iff min similarity} establishes that the minimum semantic Hamiltonian cycle corresponds to maximizing the diversity measure $\mu_{\text{DCS}}$ (defined in Section~\ref{eq: definition of mu_DCS}).

% \section{Hamiltonian Curiosity Augmented Large Language Model Reinforcement}
\section{Methodology}
\subsection{Sentence Embedding and Similarity}
\label{sec: sentence embedding and similarity}
Common text similarity metrics include TF-IDF \citep{TF-IDF}, BLEU \citep{BLEU}, ROUGE-L \citep{ROUGE-L}, and semantic vector similarity.
While external embedding models often yield effective sentence embeddings for downstream tasks such as retrieval and classification, they may be misaligned with the backbone model’s internal representations \citep{llm2vec}. 

\begin{definition}[Sentence Embedding Space]
Given a dataset $\mathcal{X} = \{x_1, x_2, \dots, x_n\}$, 
a sentence embedding is a mapping
$
f: \mathcal{X} \to \mathbb{R}^d
$
(i.e., $f(x_i) = e_i$).
The embedding space is
$
\mathcal{E} = \{f(x) : x \in \mathcal{X}\} \subset \mathbb{R}^d,
$
with similarity typically measured by cosine similarity:
$
\delta(e_i,e_j) = \frac{\langle e_i, e_j \rangle}{\|e_i\| \|e_j\|}.
$
\end{definition}

In practice, to ensure consistency, we derive embeddings directly from the backbone LLM \citep{llm2vec}.  
Given a sentence $x \in \mathcal{X}$, a forward pass produces hidden states $\{h_t\}_{t=1}^{|x|}$, from which the embedding $e$ is obtained either by mean pooling over all tokens (i.e., 
$
e=\frac{1}{|x|} \cdot  \sum_{t=1}^{|x|} h_t
$
), yielding a compact vector.
% $x$. \shancun{In our experiment, we used ...}

\begin{example}
\label{ex: embedding space}
As illustrated in Figure~\ref{fig: overview}, each sentence in $\mathcal{X}$ is mapped into the embedding space $\mathcal{E}$ through the LLM forward pass, and similarity $\delta$ reflects semantic closeness. 
For instance, $x_2$ is closer in meaning to $x_4$ but more distinct from $x_3$, and their embeddings capture these relationships.
\end{example}

\subsection{Hamiltonian Curiosity Data Reorder}

% \begin{definition}[Order Space]
% Given a dataset $\mathcal{X} = \{x_1, x_2, \dots, x_n\}$ with sentence embeddings $\{e_1, e_2, \dots, e_n\}$, 
% let the semantic similarity matrix be
% $
% M_{n \times n} = \Big\{\tfrac{\langle e_i, e_j \rangle}{\|e_i\|\cdot\|e_j\|}\Big\}_{n \times n}.
% $
% Matrix $M$ can be viewed as the adjacency matrix of a complete weighted graph $\mathcal{G}=(\mathcal{X},\mathcal{X}^2, \delta)$, where edge weight $\delta(e_i,e_j)$ denotes semantic proximity.  
% The order space $\mathcal{H}(\mathcal{X})$ is the set of all possible paths in $\mathcal{G}$, i.e.,
% $$
% \mathcal{H}(\mathcal{X}) = \Big\{ \tau = (x_{\tau(1)},x_{\tau(2)},\dots,x_{\tau(n)}) : \tau \in \{1,2,\dots,n\} \Big\},
% \quad
% \text{ where $|\mathcal{H}(\mathcal{X})|=n!$}
% $$
% \end{definition}

\begin{definition}[Order Space]
\label{def: order space}
Given a dataset $\mathcal{X} = \{x_i\}_{i=1}^n$ with embeddings $\{e_i\}_{i=1}^n$, 
and let the semantic similarity matrix be
$
M_{ij} = \frac{\langle e_i, e_j\rangle}{\|e_i\|\|e_j\|}.
$
Interpret $M$ as the adjacency matrix of a complete weighted graph $\mathcal{G}=(\mathcal{X},E,\delta)$, 
where $E = \{(x_i,x_j): x_i,x_j\in \mathcal{X}\}$ and $\delta(x_i,x_j) = M_{ij}$.  
The order space $\mathcal{H}(\mathcal{X})$ is the set of all possible sequences of the samples in $\mathcal{X}$, i.e.,
\vspace{-1mm}
\[
\mathcal{H}(\mathcal{X}) = \Big\{\mathcal{P}=(x_{\tau_1},\dots,x_{\tau_n}): \tau \text{ is a permutation of } \{1,\dots,n\} \Big\},
% \quad |\mathcal{H}(\mathcal{X})| = n!,
\]
where each sequence $\mathcal{P}$ corresponds to a path in $\mathcal{G}$ that visits every node exactly once.
\end{definition}

% \shancun{
\begin{example}
In Figure~\ref{fig: overview}, with five samples set $\mathcal{X}_{eg}=\{x_1, x_2, x_3, x_4, x_5\}$, the order space $\mathcal{H}(\mathcal{X}_{eg})$ contains $5! = 120$ possible sequences. The figure illustrates four representative orders $\mathcal{P}_1$, $\mathcal{P}_2$, $\mathcal{P}_3$, $\mathcal{P}_4$ $\in$ $\mathcal{H}(\mathcal{X}_{eg})$ .  
In $\mathcal{P}_1$, RL training proceeds in the order $x_1 \rightarrow x_2 \rightarrow x_5 \rightarrow x_4 \rightarrow x_3$.
\end{example}
% }

% \begin{example}
% In Figure~\ref{fig: overview}, with five samples the order space $\mathcal{H}$ contains $5! = 120$ possible sequences. The figure illustrates four representative orders $\mathcal{P}_1$, $\mathcal{P}_2$, $\mathcal{P}_3$, $\mathcal{P}_4$ $\in$ $\mathcal{H}$ over $\{x_1, x_2, x_3, x_4, x_5\}$.  
% In $\mathcal{P}_1$, RL training proceeds in the order $x_1 \rightarrow x_2 \rightarrow x_5 \rightarrow x_4 \rightarrow x_3$.
% \end{example}

While the order space $\mathcal{H}(\mathcal{X})$ contains $n!$ paths, we select a single path, called the \emph{Hamiltonian Curiosity Order}.

\begin{definition}[Hamiltonian Curiosity Order]
Given a path $\mathcal{P} \in \mathcal{H}(\mathcal{X})$, its cumulative similarity is defined as
$
w(\mathcal{P}) = \sum_{k=1}^{n-1} \delta\big(e_{\mathcal{P}_k}, e_{\mathcal{P}_{k+1}}\big).
$
The \textit{Hamiltonian Curiosity Order} is the Hamiltonian path $\pi^*$ that minimizes this cumulative similarity
$
\mathcal{P}^* = \arg\min_{\mathcal{P} \in \mathcal{H}(\mathcal{X})} w(\mathcal{P}).
$ 
\end{definition}

Equivalently, $\mathcal{P}^*$ corresponds to a Hamiltonian cycle of minimum weight in $\mathcal{G}$ (Definition~\ref{def: order space}), which intentionally prioritizes traversals across semantically dissimilar samples. 

\begin{example}
\label{ex: hamiltonian curiosity order}
In Figure~\ref{fig: overview}, like Example~\ref{ex: embedding space}, we consider samples $\{x_1,x_2,x_3,x_4,x_5\}$, which are embedded into $\{e_1,e_2,e_3,e_4,e_5\}$.  
The similarity matrix
\[
% M(\mathcal{X})=
\begin{pmatrix}
1.0 & 0.5 & -0.4 & 0.7 & 0.8 \\
0.5 & 1.0 & -0.8 & 0.9 & 0.3 \\
-0.4 & -0.8 & 1.0 & 0.2 & -0.3 \\
0.7 & 0.9 & 0.2 & 1.0 & 0.4 \\
0.8 & 0.3 & -0.3 & 0.4 & 1.0
\end{pmatrix}
\]

represents the weighted complete graph.  
Among the $5!=120$ possible orders, the path $\mathcal{P}_2$, i.e., $x_2\!\to\!x_3\!\to\!x_5\!\to\!x_4\!\to\!x_1$, yields the minimum cumulative similarity $w_2=-0.4-0.8+0.3+0.4=-0.5$, defining $\mathcal{P}^*=\mathcal{P}_2$.  
This \textit{Hamiltonian Curiosity Order} ensures that the traversal moves across semantically diverse regions, thereby maximizing the diversity measure $\mu_{\text{DCS}}$ (Equation~\ref{eq: definition of mu_DCS}).
\end{example}

\begin{algorithm}[t]
    \caption{Hamiltonian Cycle with Minimum Semantic Similarity}
    \label{alg: hamilton cycle}
    \centering
    \small
    \begin{algorithmic}[1]
    \Require 
        dataset $\mathcal{X}=\{x_i\}_{i=1}^n$ with embeddings $\{e_i\}_{i=1}^n$, 
        similarity matrix $M_{n\times n}$,
        expand factor $\eta$.
    \Ensure 
        reordered dataset $\mathcal{X}'$ with minimum semantic similarity.

    \State $\mathcal{X}' \gets $ reorder $\mathcal{X}$ by \Call{HeuristicHamilton}{$M$, $\eta$} \label{line:call}
    
    \Function{HeuristicHamilton}{$W$, $\eta$} 
        \State $\mathcal{P}^* \gets \emptyset$, $w^* \gets -\infty$ \label{line:init}
        \For{$t = 1 \text{ to } \lfloor n/2 \rfloor$} \label{line:for_restart}
            \State $\mathcal{P} \gets$ a random $x_0 \in \mathcal{X}$, $\mathcal{V} \gets \{x_0\}$ \label{line:init_path}
            \While{$|\mathcal{P}| < n$} \label{line:while_greedy}
                \State $x' \gets$ last element of $\mathcal{P}$ \label{line:last_node}
                \State $x^* \gets$ randomly select one of the top-$\eta$ smallest in $\{(M_{x',z}, z) : z \in \mathcal{X} \land z \notin \mathcal{V}\}$ \label{line:select_next}
                \State $\mathcal{P} \gets \mathcal{P} \cup \{x^*\}, \mathcal{V} \gets \mathcal{V} \cup \{x^*\}$ \label{line:update_path}
            \EndWhile
            \State $w \gets \sum_{i=1}^{n-1} M_{\mathcal{P}_i, \mathcal{P}_{i+1}}$ \label{line:compute_weight}
            \If{$w > w^*$} \label{line:check_best}
                $w^* \gets w$, $\mathcal{P}^* \gets \mathcal{P}$ 
                \label{line:update_best}
            \EndIf
        \EndFor
        \State \Return $\mathcal{P}^*$ \label{line:return_best}
    \EndFunction
    \end{algorithmic}
\end{algorithm}

% 
% To obtain the \emph{Hamiltonian Curiosity Order} over the semantic similarity matrix via dynamic or integer programming is intractable for large datasets, being NP-hard \citep{NP-Hard}. Instead, we adopt an \emph{$\eta$-greedy heuristic} to efficiently approximate the minimum semantic similarity cycle, as detailed in Algorithm~\ref{alg: hamilton cycle}.
% Algorithm~\ref{alg: hamilton cycle} generates the \textit{Hamiltonian Curiosity Order} via a greedy heuristic. The algorithm maintains a global best path and its cumulative semantic similarity in line~\ref{line:init} and performs multiple random restarts to explore diverse candidate paths in lines~\ref{line:for_restart}--\ref{line:update_best}. Each restart begins with a randomly selected starting node and an initialized visited set (line~\ref{line:init_path}). At each step, the next node is chosen from the top-$\eta$ least similar unvisited nodes to encourage transitions across semantically distant samples, after which the current path and visited set are updated in lines~\ref{line:while_greedy}--\ref{line:update_path}. Upon completing a path, its total semantic similarity is computed (line~\ref{line:compute_weight}) and compared with the global best, updating it if superior line~\ref{line:check_best}. After all restarts, the algorithm returns the path with minimal semantic similarity as the Hamiltonian Curiosity Order in line~\ref{line:return_best}.

% \shancun{
To obtain the \emph{Hamiltonian Curiosity Order} over the semantic similarity matrix via dynamic or integer programming is intractable for large datasets, being NP-hard \citep{NP-Hard}. Instead, we propose an \emph{$\eta$-greedy heuristic search} ($\eta$-GHS) to efficiently approximate the minimum semantic similarity cycle, as detailed in Algorithm~\ref{alg: hamilton cycle}. 
The algorithm maintains a global best path $\mathcal{P}^*$ and its cumulative semantic similarity $w^*$ (line~\ref{line:init}). Concretely, $\eta$-GHS performs multiple random restarts to explore diverse candidate paths ( lines~\ref{line:for_restart}--\ref{line:update_best}), where each restart begins with a randomly selected starting node as the starting path $\mathcal{P}=\{x_0\}$ and an initialized visited set $\mathcal{V}=\{x_0\}$ (line~\ref{line:init_path}). After restarting, the next node $x^*$ is chosen from the top-$\eta$ least similar unvisited nodes to encourage transitions across semantically distant samples. Then the current path and visited set are updated (lines~\ref{line:while_greedy}--\ref{line:update_path}). Upon completing a path $\mathcal{P}$, its total semantic similarity $w$ is computed (line~\ref{line:compute_weight}) and compared with the global best $w^*$, updating it if superior (line~\ref{line:check_best}). After all restarts, the algorithm returns the path with minimal semantic similarity as the \textit{Hamiltonian Curiosity Order} (line~\ref{line:return_best}.)
% }
Algorithm~\ref{alg: hamilton cycle} obtains a minimal semantic similarity cycle via greedy search, which is equivalent to early-stage diversity, as formalized in Theorem~\ref{thm: min DCS iff min similarity} in Section~\ref{sec: theoretical analysis}. The computation complexity of Algorithm~\ref{alg: hamilton cycle} is $\mathcal{O}(n^2)$.

\subsection{Training on Hamiltonian Curiosity Ordered Dataset}
When trained on the Hamiltonian curiosity ordered dataset, the model greedily converges to a tighter generalization bound through subset-based training, achieving faster convergence toward the optimal policy than direct training on a shuffled dataset. 
We provide a theoretical justification for this phenomenon in Section~\ref{sec: theoretical analysis}. 
While such a greedy training scheme may bring little benefit to supervised learning with strong signals, it proves highly effective in the unstable setting of LLM reinforcement learning, where supervision is inherently weak. 
Example~\ref{ex: hammer advantages} further illustrates this idea: by greedily introducing diverse samples, \textit{HAMMER} accelerates convergence and yields smoother training dynamics. 
% The overall reinforcement training procedure is shown in Algorithm~\ref{alg: overall traininig procedure}.

\vspace{-2mm}
\section{Theoretical Analysis}
\vspace{-1mm}
\label{sec: theoretical analysis}
In this chapter, we show that training on diverse subsets reduces generalization error without losing the optimal policy, forming the basis of \textit{HAMMER}’s early diverse training, and that finding the minimal Hamiltonian cycle aligns with maximizing diversity.

\subsection{Preliminary}
\begin{definition}[Optimal Policy]
Let $\mathcal{X}$ denote the sample space (e.g., state-action pairs in RL), and $\Pi$ the set of all candidate policies. For $\pi \in \Pi$, define a bounded loss function $\mathcal{L}: \Pi \times \mathcal{X} \to \mathbb{R}$. The \textit{expected risk} of a policy $\pi$ is given by:
$
\mathcal{R}_{\mathcal{X}}(\pi) = \mathbb{E}_{x \sim \mathcal{X}}[\mathcal{L}(\pi, x)],
$
while the \textit{empirical risk} on a finite dataset $\mathcal{X}$ is:
$
\hat{\mathcal{R}}_{\mathcal{X}}(\pi) = \frac{1}{|\mathcal{X}|} \sum_{x \in \mathcal{X}} \mathcal{L}(\pi, x).
$
The optimal policy $\pi^*$ is defined as the minimizer of the expected risk
$
\pi^* = \arg \min_{\pi \in \Pi} \mathcal{R}_{\mathcal{X}}(\pi).
$
\end{definition}

\begin{definition}[Induced Policy Subset]
\label{def: induced policy subset}
Given the dataset $\mathcal{X}$ of size $n$, let $\mathcal{S} \subset \mathcal{X}$ and $\gamma$ be a tolerance factor. The policy subset induced by $\mathcal{S}$ is defined as
$
\Pi_{\mathcal{S}} = \left\{ \pi \in \Pi : \hat{\mathcal{R}}_{\mathcal{S}}(\pi) \leq \hat{\mathcal{R}}_{\mathcal{S}}^* + \gamma \right\} \subset \Pi,
$
where $\hat{\mathcal{R}}_{\mathcal{S}}^* = \min_{\pi \in \Pi} \hat{\mathcal{R}}_{\mathcal{S}}(\pi)$ denotes the minimal empirical risk over $\Pi$ on $\mathcal{S}$.
\end{definition}

\begin{definition}[Generalization Error]
    Given a policy $\pi$ and the optimal policy $\pi^*$, the generalization error of $\pi$ is defined as $\Delta_{\pi} = |\mathcal{R}(\pi)-\mathcal{R}(\pi^*)|$.
\end{definition}

\begin{definition}[Diversity Metric]
    Diversity metric $\mu$ is a measure from sample space to $\mathbb{R}$. The diversity $\mu(\mathcal{X})$ decreases as the sample similarity increases. 
\end{definition}

In this work, we adopt two recent diversity metrics: \textit{DCScore} \citep{diversity_in_synthetic_dataset} and \textit{n-gram} based distinct-$n$ method \citep{n-gram-diversity}.  
For a dataset $\mathcal{X} = \{x_1, \dots, x_n\}$ with embeddings $\{e_1, \dots, e_n\}$, let $M \in \mathbb{R}^{n\times n}$ be the semantic cosine similarity matrix with $M_{ij} = \langle e_i, e_j \rangle$. 
The \textit{DCScore} is defined as 
\vspace{-1mm}
\begin{equation}
\label{eq: definition of mu_DCS}
\mu_{\text{DCS}}(\mathcal{X}) = \mathbf{tr}
\left( \operatorname{sofmax} (M_{n\times n}) \right)
= \mathbf{tr} 
\left[
\left(
\frac{e^{M_{ij}}}{\sum_{j=1}^n e^{M_{ij}}}
\right)_{n\times n}
\right]
,
\end{equation}
where $\text{softmax}$ is applied row-wise and $\mathbf{tr}$ is the matrix trace.  
The $m$-gram metric measures lexical diversity by counting distinct $m$-grams across $\mathcal{X}$, let $G_m(x)$ be the multiset of $m$-grams in a sample $x$. The $m$-gram diversity is defined as 
\vspace{-1mm}
\begin{equation}
    \mu_{\text{NGM}}(\mathcal{X}) = \frac{|\{\, g : g \in G_m(x),  x \in \mathcal{X}\,\}|}{\sum_{x\in\mathcal{X}} |G_m(x)|}.
\end{equation}
Both $\mu\_{\text{DCS}}$ and $\mu\_{\text{NGM}}$ may decay with increasing sample size \citep{diversity_in_synthetic_dataset}, so an adjustment is $\mu(\mathcal{X})=|\mathcal{X}|^p \cdot \mu(\mathcal{X})$, where $p$ is a constant; following \citep{diversity_in_synthetic_dataset}, we set $p=0.5$.

\subsection{Key Theorems}
All proofs of the following theorems are detailed in Appendix~\ref{app: proof of theorems}. 
By the VC inequality \citep{proof_of_vc_inequality} (formally defined in Lemma~\ref{lem: vc inequality}), with probability at least $1-\delta$, for a policy class $\Pi$ with VC dimension $d$ and $n$ i.i.d. samples $\mathcal{S}$, the following inequality holds
\begin{equation}
    \label{eq: generation bound}
    \sup_{\pi \in \Pi} \left| \hat{\mathcal{R}}_{\mathcal{S}}(\pi) - \mathcal{R}(\pi) \right| \leq C\sqrt{\frac{d\log(n/d) + \log(1/\delta)}{n}}, \quad
\text{where $C > 0$ is some constant.}
\end{equation}
For short, denote $\rho = C\sqrt{\tfrac{d\log(n/d)+\log(1/\delta)}{n}}$, where $d$ is the VC-dimension and $n$ the sample size.

\begin{theorem}
\label{thm: muti-stages training support condition}
Given a subset $\mathcal{S} \subset \mathcal{X}$ of $n$ samples, let $\pi^*$ be the optimal policy on $\mathcal{X}$. There exists some $\gamma$ (i.e., $\gamma=2\rho$) such that
$
\pi^* \in \Pi_{\mathcal{S}}.
$
\end{theorem}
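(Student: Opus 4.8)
The plan is to invoke the uniform convergence bound of Equation~\ref{eq: generation bound} twice --- once at the global optimum $\pi^*$ and once at the empirical risk minimizer on $\mathcal{S}$ --- and then chain the two inequalities through the optimality of $\pi^*$. Concretely, I would first introduce $\hat{\pi}_{\mathcal{S}} = \arg\min_{\pi \in \Pi} \hat{\mathcal{R}}_{\mathcal{S}}(\pi)$, so that by definition $\hat{\mathcal{R}}_{\mathcal{S}}^* = \hat{\mathcal{R}}_{\mathcal{S}}(\hat{\pi}_{\mathcal{S}})$. The goal is to show $\hat{\mathcal{R}}_{\mathcal{S}}(\pi^*) \leq \hat{\mathcal{R}}_{\mathcal{S}}^* + 2\rho$, which is exactly the membership condition $\pi^* \in \Pi_{\mathcal{S}}$ with the choice $\gamma = 2\rho$ from Definition~\ref{def: induced policy subset}.

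Since Equation~\ref{eq: generation bound} is a supremum bound over all of $\Pi$, on the single event of probability at least $1-\delta$ it holds simultaneously at every policy; in particular both $|\hat{\mathcal{R}}_{\mathcal{S}}(\pi^*) - \mathcal{R}(\pi^*)| \leq \rho$ and $|\hat{\mathcal{R}}_{\mathcal{S}}(\hat{\pi}_{\mathcal{S}}) - \mathcal{R}(\hat{\pi}_{\mathcal{S}})| \leq \rho$ are valid. Next I would write the chain
$$
\hat{\mathcal{R}}_{\mathcal{S}}(\pi^*) \;\leq\; \mathcal{R}(\pi^*) + \rho \;\leq\; \mathcal{R}(\hat{\pi}_{\mathcal{S}}) + \rho \;\leq\; \hat{\mathcal{R}}_{\mathcal{S}}(\hat{\pi}_{\mathcal{S}}) + 2\rho \;=\; \hat{\mathcal{R}}_{\mathcal{S}}^* + 2\rho,
$$
where the first and third inequalities are the two instances of uniform convergence and the crucial middle inequality $\mathcal{R}(\pi^*) \leq \mathcal{R}(\hat{\pi}_{\mathcal{S}})$ uses that $\pi^*$ minimizes the \emph{expected} risk over $\Pi$. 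This yields the claim directly.

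The step requiring the most care is justifying that the uniform bound may be applied at $\hat{\pi}_{\mathcal{S}}$, which is itself a data-dependent quantity: this is legitimate precisely because Equation~\ref{eq: generation bound} controls the supremum over all $\pi \in \Pi$, so no union bound or re-derivation at the random minimizer is required. A secondary point to flag is the i.i.d. hypothesis under which Lemma~\ref{lem: vc inequality} is stated; I would note that the bound is invoked for $\mathcal{S}$ treated as the sample on which the empirical risk is measured, and that the two optimizers must not be conflated --- $\pi^*$ optimizes $\mathcal{R}$ while $\hat{\pi}_{\mathcal{S}}$ optimizes $\hat{\mathcal{R}}_{\mathcal{S}}$, and it is exactly this distinction that makes the middle inequality meaningful. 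Everything else is a routine triangle-inequality chaining.
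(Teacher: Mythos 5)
Your proposal is correct and is essentially the paper's own argument: two applications of the uniform convergence bound (at $\pi^*$ and at the empirical minimizer) chained through the optimality of $\pi^*$, yielding $\hat{\mathcal{R}}_{\mathcal{S}}(\pi^*) \leq \hat{\mathcal{R}}_{\mathcal{S}}^* + 2\rho$. The only cosmetic difference is that you name $\hat{\pi}_{\mathcal{S}}$ explicitly, whereas the paper obtains the same inequality $\hat{\mathcal{R}}_{\mathcal{S}}^* \geq \mathcal{R}(\pi^*) - \rho$ by taking minima over $\Pi$ on both sides of the uniform bound.
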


By Theorem~\ref{thm: muti-stages training support condition}, selecting a subset $\mathcal{S}$ from $\mathcal{X}$ that satisfies the $\gamma$-condition ensures that the optimal policy $\pi^*$ is preserved, thereby guaranteeing the optimality of the subset selection approach.

\begin{theorem}
\label{thm: local risk optimization}
    % Given a subset $\mathcal{S}$ of $n$ samples,
    % if $\gamma=2\rho$, then 
    % $\forall \pi \in \Pi_{\mathcal{S}}, \Delta_{\pi} \leq \mathcal{O}
    % \left(
    % \sqrt{\frac{d\log(n/d) + \log(1/\delta)}{n}}
    % \right).
    % $
    For a subset $\mathcal{S}$ of $n$ samples, when $\gamma=2\rho$,
\vspace{-1mm}
$$
\forall \pi \in \Pi_{\mathcal{S}}, \Delta_{\pi} \leq \mathcal{O}\left(\sqrt{\frac{d\log(n/d) + \log(1/\delta)}{n}}\right).
$$
\end{theorem}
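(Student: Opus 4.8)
The plan is to run the standard empirical-risk-minimization chaining argument, exploiting the uniform deviation bound $\rho$ from Equation~\ref{eq: generation bound} together with the membership condition that defines $\Pi_{\mathcal{S}}$ in Definition~\ref{def: induced policy subset}. First I would fix the high-probability event (of probability at least $1-\delta$) on which the VC inequality of Lemma~\ref{lem: vc inequality} holds uniformly, namely $\sup_{\pi\in\Pi}|\hat{\mathcal{R}}_{\mathcal{S}}(\pi)-\mathcal{R}(\pi)|\le\rho$. All subsequent inequalities are asserted on this single good event, so that the deviation bound can be invoked simultaneously for the arbitrary $\pi\in\Pi_{\mathcal{S}}$, for the empirical minimizer, and for the optimal policy $\pi^*$, without any loss in the confidence level.

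Next I would chain four inequalities to upper-bound $\mathcal{R}(\pi)$. Starting from the deviation bound, $\mathcal{R}(\pi)\le\hat{\mathcal{R}}_{\mathcal{S}}(\pi)+\rho$. Since $\pi\in\Pi_{\mathcal{S}}$, the defining condition gives $\hat{\mathcal{R}}_{\mathcal{S}}(\pi)\le\hat{\mathcal{R}}_{\mathcal{S}}^*+\gamma$. Because $\hat{\mathcal{R}}_{\mathcal{S}}^*=\min_{\pi'\in\Pi}\hat{\mathcal{R}}_{\mathcal{S}}(\pi')\le\hat{\mathcal{R}}_{\mathcal{S}}(\pi^*)$, and applying the deviation bound once more to $\pi^*$, I obtain $\hat{\mathcal{R}}_{\mathcal{S}}(\pi^*)\le\mathcal{R}(\pi^*)+\rho$. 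Combining these steps yields $\mathcal{R}(\pi)\le\mathcal{R}(\pi^*)+\gamma+2\rho$, and substituting $\gamma=2\rho$ gives the one-sided estimate $\mathcal{R}(\pi)-\mathcal{R}(\pi^*)\le4\rho$.

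Finally, to convert this into a bound on $\Delta_{\pi}=|\mathcal{R}(\pi)-\mathcal{R}(\pi^*)|$, I would invoke the optimality of $\pi^*$ as the minimizer of the expected risk, which forces $\mathcal{R}(\pi)-\mathcal{R}(\pi^*)\ge0$; hence the absolute value is redundant and $\Delta_{\pi}\le4\rho=\mathcal{O}(\rho)$, which is exactly the claimed order $\mathcal{O}\!\left(\sqrt{(d\log(n/d)+\log(1/\delta))/n}\right)$ after unfolding the definition of $\rho$.

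The argument is essentially bookkeeping, so the main obstacle is conceptual rather than computational: I must ensure the single high-probability event is shared across all three applications of the VC bound, since a naive union over three deviations would degrade the confidence $\delta$. A secondary point to verify is that the subset $\mathcal{S}$ inherits the i.i.d.\ hypothesis required by Lemma~\ref{lem: vc inequality}, and that $\pi^*$ (the population minimizer over all of $\Pi$) is a legitimate reference inside the deviation bound; because that bound is uniform over $\Pi$, it applies to $\pi^*$ directly, so no separate realizability or approximation term enters the estimate.
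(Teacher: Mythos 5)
Your proposal is correct and follows essentially the same chaining argument as the paper's proof: the uniform deviation bound, the defining condition of $\Pi_{\mathcal{S}}$ with $\gamma=2\rho$, and a second application of the deviation bound at $\pi^*$. If anything your version is slightly more careful than the paper's --- the paper writes $\hat{\mathcal{R}}_{\mathcal{S}}^* = \hat{\mathcal{R}}(\pi^*)$ where only $\hat{\mathcal{R}}_{\mathcal{S}}^* \leq \hat{\mathcal{R}}_{\mathcal{S}}(\pi^*)$ holds and stops at $\hat{\mathcal{R}}(\pi^*)+3\rho$ without explicitly converting the empirical risk of $\pi^*$ back to $\mathcal{R}(\pi^*)$, whereas your chain $\hat{\mathcal{R}}_{\mathcal{S}}^* \leq \hat{\mathcal{R}}_{\mathcal{S}}(\pi^*) \leq \mathcal{R}(\pi^*)+\rho$ yields the clean constant $4\rho$ and your remark that $\mathcal{R}(\pi)\geq\mathcal{R}(\pi^*)$ disposes of the absolute value; both give the same $\mathcal{O}(\rho)$ conclusion.
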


% \begin{proof}
%     $\forall \pi \in \Pi_{\mathcal{S}}$, by Inequality~\ref{eq: uniform bound} and Definition~\ref{def: induced policy subset}, we have 
%     \vspace{-1mm}
%     $$
%     \mathcal{R}(\pi) \leq \hat{\mathcal{R}}_{\mathcal{S}}(\pi) + \rho \leq 
%     \left(
%     \hat{\mathcal{R}}_{\mathcal{S}}^* + \gamma
%     \right) + \rho = 
%     (\hat{\mathcal{R}}(\pi^*) + \gamma) + \rho
%     = \hat{\mathcal{R}}(\pi^*) + 3\rho
%     $$
%     \vspace{-1mm} 
%     Thus, we deduce $\Delta_{\pi} = |\mathcal{R}(\pi)-\mathcal{R}(\pi^*)| \leq 3\rho = \mathcal{O}
%     \left(
%     \sqrt{\frac{d\log(n/d) + \log(1/\delta)}{n}}
%     \right)
%     $.
% \end{proof}

The generalization error bound $\rho \propto \mathcal{O}(\sqrt{\log n / n})$, which decreases slowly; hence, the benefit of additional samples diminishes as $n$ grows, especially in unstable LLM reinforcement learning.
To this end, Theorems~\ref{thm: muti-stages training support condition} and \ref{thm: local risk optimization} demonstrate that, without sacrificing the optimal policy, optimizing over a subset can also effectively reduce the generalization error.

Thus, a more diverse subset $\mathcal{S}$ enables the empirical risk $\hat{\mathcal{R}}_{\mathcal{S}}$ to better approximate the true risk $\mathcal{R}$, and still enhancing generalization. We therefore partition $\mathcal{X}$ into $\mathcal{S}$ and $\mathcal{X}\setminus\mathcal{S}$, selecting $\mathcal{S}$ to maximize diversity, and adopt a two-stage training scheme. 
In LLM reinforcement learning, such early reduction of the generalization gap promotes convergence under high variance, since it can quickly decrease the generation error bound. This idea naturally generalizes to multi-stage: 
\label{muti-stages schema}
dividing $\mathcal{X}$ into $k$ subsets $\mathcal{S}_1 \subset \mathcal{X}, \mathcal{S}_2 \subset \mathcal{X}/\mathcal{S}_1,\dots, \mathcal{S}_k\subset \mathcal{X}/ \cup_{i=1}^{k-1}\mathcal{S}_i$, each maximizing diversity $\mu(\mathcal{S}_i)$, and training sequentially in $k$ stages. As $k$ grows large, this process converges to our proposed \textit{HAMMER}.

\begin{example}
\label{ex: hammer advantages}
In Figure~\ref{fig: example of diversified sample training}, 
\textit{HAMMER} leverages diverse subsets (e.g., $\mathcal{S}\subset\mathcal{X}$) to rapidly reduce generalization error, while ensuring that the candidate policy set $\Pi_{\mathcal{S}}$ still retains the optimal policy $\pi^*$. 
Unlike training on the full dataset $\mathcal{X}$, where the generalization risk closely follows the true risk surface (green curve), training on a subset $\mathcal{S}$ yields a sparser trajectory: the subset risk does not exactly match the original risk, nor does it directly converge to the global minimum. 
By greedily introducing diverse samples, \textit{HAMMER} accelerates convergence in LLM reinforcement learning and leads to smoother training dynamics.

% \begin{figure}[t]
%   \centering
%   % \vspace{-6mm}
%     \includegraphics[width=0.6\linewidth]{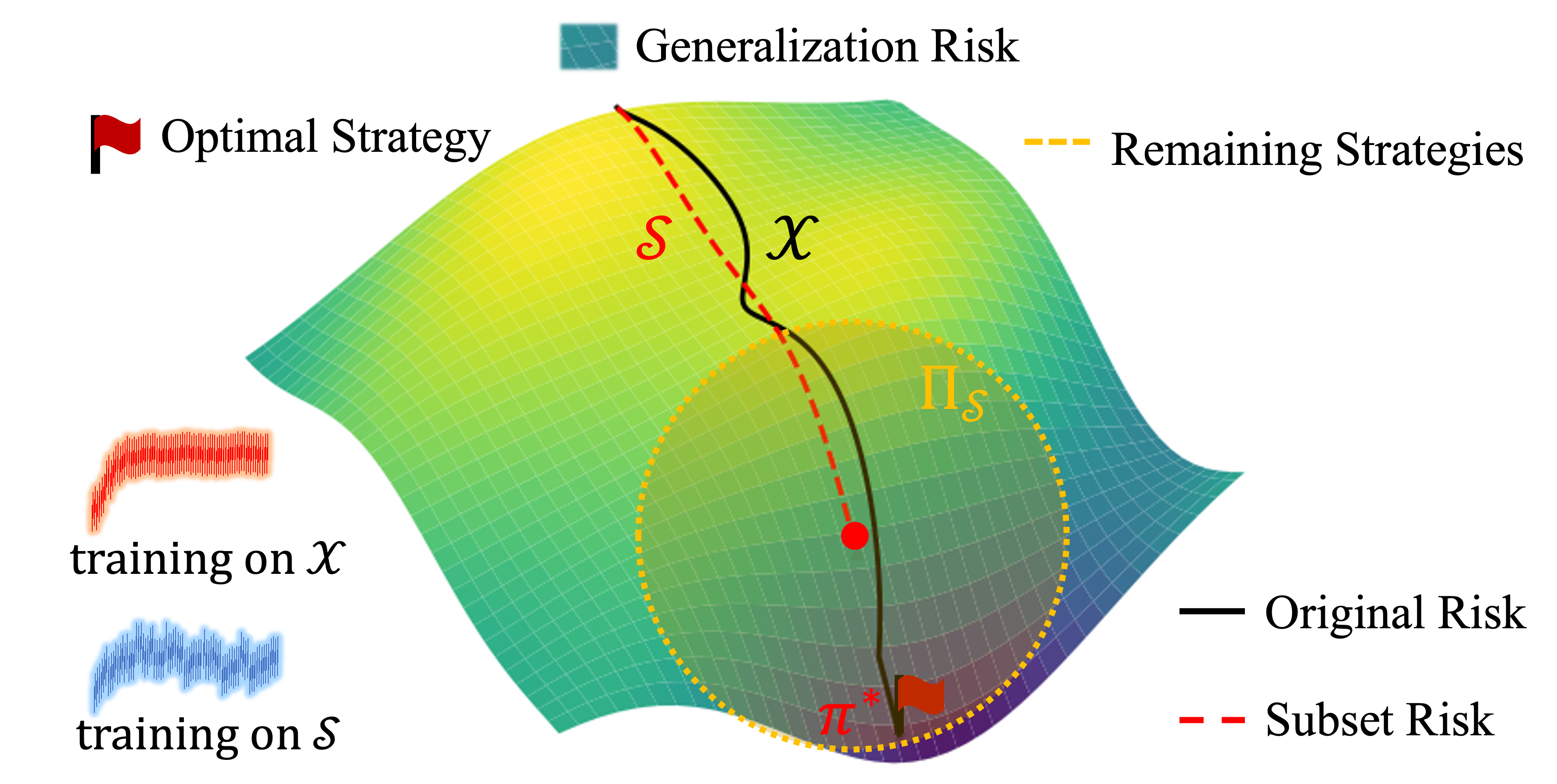}
% \caption{Illustration of the benefits of introducing diversity early in training.}
%   \label{fig: example of diversified sample training}
%   % \vspace{-3mm}
% \end{figure}
\end{example}

% Example~\ref{ex: hammer advantages} illustrates how \textit{HAMMER}, by leveraging Theorem~\ref{thm: local risk optimization} with early-stage diversity training, rapidly tightens the generalization bound and stabilizes reinforcement learning. Diverse samples bring empirical risk closer to true risk, while even a small set of them effectively reduces generalization error, leading to more efficient optimization.
Example~\ref{ex: hammer advantages} shows how \textit{HAMMER}, via Theorem~\ref{thm: local risk optimization} and early diversity training, quickly tightens the generalization bound and stabilizes RL. Diverse samples align empirical with true risk and reduce generalization error, enabling more efficient optimization.

\begin{theorem}
\label{thm: min DCS iff min similarity}
    Given a dataset $\mathcal{X} = \{x_i\}_{i=1}^n$ with embeddings $\{e_i\}_{i=1}^n$, let $\mathcal{S} \subset \mathcal{X}$ and $|\mathcal{S}|=m$, $M(\mathcal{S}) \in \mathbb{R}^{m\times m}$ be the semantic cosine similarity matrix of $\mathcal{S}$ with $M_{ij}(\mathcal{S}) = \langle e_i, e_j \rangle$, then we have
    $$
    \max_{\mathcal{S}\subset \mathcal{X}, |\mathcal{S}|=m} \mu_{\text{DCS}}(\mathcal{S}) \iff \min_{\mathcal{S}\subset \mathcal{X}, |\mathcal{S}|=m} \sum_{i=1}^m\sum_{j=1}^m \mathbb{I}(i\neq j)\cdot M_{ij}(\mathcal{S}).
    $$
\end{theorem}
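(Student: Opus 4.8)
The plan is to recast both optimization problems in terms of a single scalar, the total off-diagonal similarity $S(\mathcal{S}) = \sum_{i=1}^m\sum_{j=1}^m \mathbb{I}(i\neq j)\, M_{ij}(\mathcal{S})$, and then to show that $\mu_{\text{DCS}}(\mathcal{S})$ decreases as $S(\mathcal{S})$ grows. First I would use the cosine-similarity convention: for unit-normalized embeddings every diagonal entry equals $M_{ii}(\mathcal{S}) = \langle e_i, e_i\rangle = 1$, so each row of the row-wise softmax has the \emph{same} numerator $c := e^{M_{ii}} = \exp(1)$. The DCScore of Equation~\ref{eq: definition of mu_DCS} therefore collapses to the closed form
\[
\mu_{\text{DCS}}(\mathcal{S}) = \sum_{i=1}^m \frac{c}{\,c + \sum_{j\neq i} e^{M_{ij}(\mathcal{S})}\,},
\]
which depends on $\mathcal{S}$ only through the off-diagonal similarities that appear on the right-hand side of the claim.

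Next I would establish exact coordinatewise monotonicity. Since $M(\mathcal{S})$ is symmetric, a single off-diagonal entry $M_{ab}$ occurs in exactly two rows ($a$ and $b$, with $M_{ba}=M_{ab}$), so differentiating the closed form gives
\[
\frac{\partial \mu_{\text{DCS}}}{\partial M_{ab}} = -\frac{c\, e^{M_{ab}}}{\left(c + \sum_{j\neq a} e^{M_{aj}}\right)^2} - \frac{c\, e^{M_{ab}}}{\left(c + \sum_{j\neq b} e^{M_{bj}}\right)^2} < 0 .
\]
Thus $\mu_{\text{DCS}}$ strictly decreases whenever any pairwise similarity increases, which already shows that large diversity and large DCScore move in the same direction and that the two objectives share a common gradient orientation over the off-diagonal coordinates.

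The remaining and most delicate step is to pass from this coordinatewise statement to the scalar $S(\mathcal{S})$, because the row-wise normalization makes $\mu_{\text{DCS}}$ sensitive to how the off-diagonal mass is spread across rows, not merely to its total. To bridge the gap I would linearize in the small-similarity regime natural for diverse subsets: writing $e^{M_{ij}} = 1 + M_{ij} + O(M_{ij}^2)$ and $s_i = \sum_{j\neq i} M_{ij}$, each denominator becomes $a + s_i$ with $a = c + (m-1)$, and the further expansion $\frac{c}{a+s_i} = \frac{c}{a} - \frac{c}{a^2}\,s_i + O(s_i^2)$ yields
\[
\mu_{\text{DCS}}(\mathcal{S}) = \frac{c\,m}{a} - \frac{c}{a^2}\,S(\mathcal{S}) + O\!\left(\|M_{\mathrm{off}}\|^2\right).
\]
Because $c/a^2 > 0$, to leading order $\mu_{\text{DCS}}$ is an affine, strictly decreasing function of $S(\mathcal{S})$, so the maximizer of $\mu_{\text{DCS}}$ coincides with the minimizer of $S(\mathcal{S})$, which is the asserted equivalence. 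I expect the obstacle to lie precisely here: the per-row softmax means $\mu_{\text{DCS}}$ is \emph{not} a function of $S$ alone beyond first order (for a fixed total, a balanced distribution of similarities yields a strictly larger DCScore by convexity), so the clean ``iff'' is rigorous only in the first-order/small-similarity regime, and a fully exact version would require either bounding the higher-order remainder or restricting to families of subsets that are entrywise comparable, where the coordinatewise monotonicity of the second step applies directly.
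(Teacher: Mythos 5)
Your proposal follows essentially the same route as the paper's proof: both reduce $\mu_{\text{DCS}}(\mathcal{S})$ to $m$ minus the total off-diagonal softmax mass (you via the closed form with common numerator $c=e^{1}$, the paper via row-stochasticity of the softmax), and both then invoke the fact that each off-diagonal entry $\mathbb{P}_{ij}$ is strictly increasing in $M_{ij}$ when the other entries of its row are held fixed. Your explicit derivative computation makes the coordinatewise monotonicity cleaner than the paper's verbal version, and the first-order expansion $\mu_{\text{DCS}}(\mathcal{S}) \approx \frac{cm}{a}-\frac{c}{a^{2}}\,S(\mathcal{S})$ is an addition the paper does not have.

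The caveat you raise at the end is the substantive point, and it is not a defect of your write-up alone: the paper's proof has exactly the same gap. Coordinatewise monotonicity only compares subsets whose similarity matrices are entrywise ordered; it does not show that the subset minimizing the scalar $S(\mathcal{S})$ is the subset maximizing $\mu_{\text{DCS}}(\mathcal{S})$. Because of the per-row normalization, $\mu_{\text{DCS}}$ depends on how the off-diagonal similarity is distributed across and within rows (through the convexity of $t\mapsto e^{t}$), not only on its total, so two subsets can satisfy $S(\mathcal{S}_1)<S(\mathcal{S}_2)$ yet $\mu_{\text{DCS}}(\mathcal{S}_1)<\mu_{\text{DCS}}(\mathcal{S}_2)$: concentrate the same total similarity evenly in one subset and spread it into a few large positive and negative entries in the other. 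The paper's sentence asserting that smaller off-diagonal similarity values yield smaller total off-diagonal mass silently assumes entrywise comparability. Your honest restriction to the first-order/small-similarity regime, or to entrywise-comparable families, is the correct scope of the claim; a fully rigorous version of the theorem would need that restriction stated as a hypothesis or a controlled bound on the higher-order remainder, as you suggest.
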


\begin{example}
\label{ex: maximize mu_DCS}
Let $\mathcal X=\{x_1,\dots,x_5\}$ and $M(\mathcal X)$ be the semantic-similarity matrix given in Example~\ref{ex: hamiltonian curiosity order}.
We compare Hamiltonian Curiosity Order $\mathcal{P}^*$ and another order $\mathcal{P}_1$  of the samples:
\[
\mathcal P^* = \mathcal{P}_2=(x_2,x_3,x_5,x_4,x_1) \quad \text{ and }
\quad
\mathcal P_1=(x_1,x_2,x_5,x_4,x_3).
\]
As defined in Equation~\ref{eq: definition of mu_DCS}, for each prefix of size $n$ we compute
\(\mu_{\text{DCS}}=\mathbf{tr}\big(\operatorname{softmax}(M)\big)\cdot n^p\)
with $p=1$.
As shown in Table~\ref{tab:mu_dcs_comparison}, we observe that for every subset ($n<5$), the Hamiltonian curiosity order $\mathcal P^*$ attains a strictly larger
\(\mu_{\text{DCS}}\) than $\mathcal P_1$, while both orders coincide on the full set ($n=5$).
Hence $\mathcal P_1$ maximizes sample diversity in early training stages.

% \begin{table}[h]
% % \small
% \centering
% \caption{Comparison of $\mu_{\text{DCS}}$ values for different prefix subsets under two orders.}
% \label{tab:mu_dcs_comparison}
% \begin{tabular}{c ccccc}
% \toprule
% \textbf{Prefix Subset} & $\{x_1\}$ & $\{x_1,x_2\}$ & $\{x_{i}\}_{i=1}^3$ & $\{x_{i}\}_{i=1}^4$ & $\{x_{i}\}_{i=1}^5$ \\
% \midrule
% $\mu_{\text{DCS}}$ of $\mathcal{P}^*$ & 1.00 & 3.43 & 5.59 & 6.78 & 8.35 \\
% $\mu_{\text{DCS}}$ of $\mathcal{P}_1$ & 1.00 & 2.49 & 3.96 & 5.24 & 8.35 \\
% \bottomrule
% \end{tabular}
% \end{table}
\end{example}

\vspace{-2mm}
\section{Evaluation}
\vspace{-1.5mm}
\subsection{Experimental Setup}
\vspace{-2mm}
Algorithms, data and experimental details are included in supplementary materials.
% the anonymous repository 
% \url{https://anonymous.4open.science/r/HAMMER-B17F}.
% \paragraph{Datasets}
% We evaluate HAMMER on four mathematical benchmarks: AIME 2024, AIME 2025, AMC 2023, and Olympiad. All models are trained on DeepScaleR, with datasets ordered by minimal similarity via Algorithm~\ref{alg: hamilton cycle}. For the main experiments, we compare against DAPO and GRPO trained on randomly shuffled data. On AIME 2024, AIME 2025, and AMC 2023, we generate $1$, $10$, and $100$ responses, repeat sampling 10 times, and report the average $pass@k$ and $cons@100$ $(k \in {1,10,100})$, where $pass@k$ measures solution accuracy (the probability that at least one out of $k$ attempts passes verification) and $cons@k$ measures response consistency (the probability that the majority-vote answer among $k$ attempts passes verification) \citep{deepseek-r1}. For Olympiad, due to its larger scale, we report only $pass@1$, $pass@10$, $pass@32$, and $cons@32$, which suffice for reliable estimation.
% Please see Appendix~\ref{app: experimental setup} for more details.
% \blue{
We evaluate \textit{HAMMER} on four mathematical benchmarks: AIME 2024, AIME 2025, AMC 2023, and Olympiad. Models are trained on DeepScaleR ordered by Algorithm~\ref{alg: hamilton cycle}. We compare against DAPO and GRPO trained on randomly shuffled data. For AIME 2024/2025 and AMC 2023, we report average \textit{pass@1, pass@10, pass@100} and $cons@100$, where $pass@k$ measures solution accuracy 
and $cons@k$ 
(frequency that at least one out of $k$ attempts passes verification)
(frequency that the majority-answer among $k$ attempts passes verification)
measures majority-vote consistency \citep{deepseek-r1}. For larger Olympiad, we report $pass@1$, $pass@10$, $pass@32$, and $cons@32$. See Appendix~\ref{app: experimental setup} for details.
% }

\vspace{-1mm}
\subsection{Main Experiment}
\vspace{-2mm}

\begin{table*}[t]
\centering
% \vspace{-6mm}
\caption{Main results comparing baseline (B) and \textit{HAMMER} (H), with $k=100$ for AIME 2024/2025 and AMC 2023, $k=32$ for Olympiad, and $\mathbf{Diff}=\mathbf{Avg}_{\text H}-\mathbf{Avg}_{\text B}$.}
\label{tab: main results}
\renewcommand{\arraystretch}{1.1}
\setlength{\tabcolsep}{3pt}
\scalebox{0.75}{ % 闪存
\setlength{\tabcolsep}{2.8mm}

\begin{tabular}{l l c c c c c c c c c c c}
\toprule
\multirow{2}{*}{\textbf{Method}}
& 
\multirow{2}{*}{\textbf{Dataset}}
& \multicolumn{2}{c}{pass@1} & \multicolumn{2}{c}{pass@10} & \multicolumn{2}{c}{pass@k} & \multicolumn{2}{c}{cons@k} & \multicolumn{2}{c}{Avg.} & 
\multirow{2}{*}{\textbf{Diff.}}
\\
% \cline{3-12}
\cmidrule(r){3-4} \cmidrule(r){5-6} \cmidrule(r){7-8} \cmidrule(r){9-10} \cmidrule(r){11-12}
 & & B & H & B & H & B & H & B & H & B & H & \\ 
\midrule
\multirow{4}{*}{Qwen3-1.7B-DAPO} & AIME 2024 & 36.3 & 39.3 & 61.0 & 64.7 & 69.0 & 74.3 & 43.3 & 43.3 & 52.4 & \cellcolor{blue!12}55.4 & \textcolor{ForestGreen}{+3.0} \\
 & AIME 2025 & 25.3 & 31.0 & 39.7 & 48.7 & 56.3 & 59.7 & 30.0 & 30.0 & 37.8 & \cellcolor{blue!12}42.3 & \textcolor{ForestGreen}{+4.5} \\
 & AMC 2023  & 64.2 & 68.9 & 83.3 & 85.1 & 90.3 & 91.3 & 74.7 & 77.1 & 78.1 & \cellcolor{blue!5}80.6 & \textcolor{ForestGreen}{+2.5} \\
 & Olympiad  & 51.7 & 53.5 & 64.0 & 65.3 & 67.3 & 68.3 & 56.6 & 56.6 & 59.9 & \cellcolor{blue!5}60.9 & \textcolor{ForestGreen}{+1.0} \\ \hline

\multirow{4}{*}{Qwen3-1.7B-GRPO} & AIME 2024 & 36.3 & 40.0 & 59.3 & 63.6 & 70.0 & 73.3 & 43.3 & 43.3 & 52.4 & \cellcolor{blue!5}55.1 & \textcolor{ForestGreen}{+2.7} \\
 & AIME 2025 & 24.7 & 26.3 & 40.3 & 44.3 & 50.7 & 59.7 & 30.0 & 30.0 & 36.4 & \cellcolor{blue!12}40.1 & \textcolor{ForestGreen}{+3.7} \\
 & AMC 2023  & 63.1 & 68.5 & 83.0 & 84.7 & 88.5 & 91.2 & 74.7 & 77.1 & 77.4 & \cellcolor{blue!12}80.4 & \textcolor{ForestGreen}{+3.0} \\
 & Olympiad  & 53.3 & 54.0 & 65.6 & 65.4 & 68.8 & 68.4 & 56.7 & 56.6 & 61.1 & 61.1 & \textcolor{black}{0.0} \\ \hline

\multirow{4}{*}{Qwen3-4B-DAPO} & AIME 2024 & 52.3 & 54.7 & 72.0 & 75.7 & 79.7 & 83.3 & 60.0 & 63.3 & 66.0 & \cellcolor{blue!12}69.3 & \textcolor{ForestGreen}{+3.3} \\
 & AIME 2025 & 39.7 & 43.7 & 51.7 & 60.7 & 63.0 & 63.3 & 46.7 & 53.3 & 50.3 & \cellcolor{blue!12}55.3 & \textcolor{ForestGreen}{+5.0} \\
 & AMC 2023  & 75.5 & 78.6 & 87.9 & 88.3 & 91.6 & 91.6 & 83.1 & 81.3 & 84.5 & 85.4 & \textcolor{ForestGreen}{+0.9} \\
 & Olympiad  & 62.4 & 63.1 & 72.8 & 74.2 & 75.5 & 76.6 & 62.5 & 64.3 & 68.3 & \cellcolor{blue!5}69.6 & \textcolor{ForestGreen}{+1.3} \\ \hline

\multirow{4}{*}{Qwen3-4B-GRPO} & AIME 2024 & 48.9 & 49.7 & 67.6 & 71.3 & 73.1 & 83.0 & 60.0 & 56.7 & 62.4 & \cellcolor{blue!5}65.2 & \textcolor{ForestGreen}{+2.8} \\
 & AIME 2025 & 40.0 & 43.7 & 54.7 & 60.3 & 60.0 & 66.3 & 53.3 & 50.0 & 52.0 & \cellcolor{blue!12}55.8 & \textcolor{ForestGreen}{+3.8} \\
 & AMC 2023  & 76.0 & 77.7 & 88.5 & 91.2 & 92.0 & 94.7 & 86.7 & 86.8 & 85.8 & \cellcolor{blue!5}87.6 & \textcolor{ForestGreen}{+1.8} \\
 & Olympiad  & 62.5 & 63.7 & 72.5 & 74.0 & 75.4 & 76.5 & 62.7 & 64.3 & 68.3 & \cellcolor{blue!5}69.6 & \textcolor{ForestGreen}{+1.3} \\
\bottomrule
\end{tabular}
}
\end{table*}

Main experiment is trained on the DeepScaleR using Qwen3-1.7B and Qwen3-4B as backbone models with DAPO and GRPO. The baseline adopts randomly shuffled training data, while \textit{HAMMER} leverages the \textit{Hamiltonian Curiosity Order}. 
After convergence, we evaluate the models with \textit{pass@1}, \textit{pass@10}, \textit{pass@100} and \textit{cons@100}.
As shown in Table~\ref{tab: main results},
\textit{HAMMER} achieves an average accuracy improvement of 3--4 \% over the baseline. The models not only improve pass rates but also enhance answer consistency. 
As model size increases, the performance gains of \textit{HAMMER} remain stable, demonstrating that \textit{HAMMER} effectively leverages semantic similarity to optimize training efficiency without diminishing with larger models.

\subsection{Training Dynamic}
\vspace{-1mm}
Figure~\ref{fig: qwen3-1.7b dapo training dynamic} and 
Figure~\ref{fig: qwen3-1.7b grpo training dynamic} present the \textit{pass@k} evaluation of Qwen3-1.7B trained on DeepScaleR across AIME 2024, AIME 2025, AMC  2023, and Olympiad ($k=8$ for AIME 2024, AIME 2025, AMC 2023, and $k=1$ for Olympiad). 
% \textit{HAMMER} consistently achieves better performance at the same step. 
% The advantage of GRPO with \textit{HAMMER} is less pronounced on Olympiad, it still shows a superior trend in the later stages of training,
% which can be attributed to the more diverse training induced in the early stages (see the analysis in Section~\ref{sec: theoretical analysis} and Section~\ref{sec: theory validation}).
\textit{HAMMER} consistently outperforms baselines at the same step. 
For GRPO on Olympiad, the gains are smaller but become evident in later stages.

\begin{figure}[htbp]
% \small
  \centering
  \vspace{-2mm}
  \includegraphics[width=\linewidth]{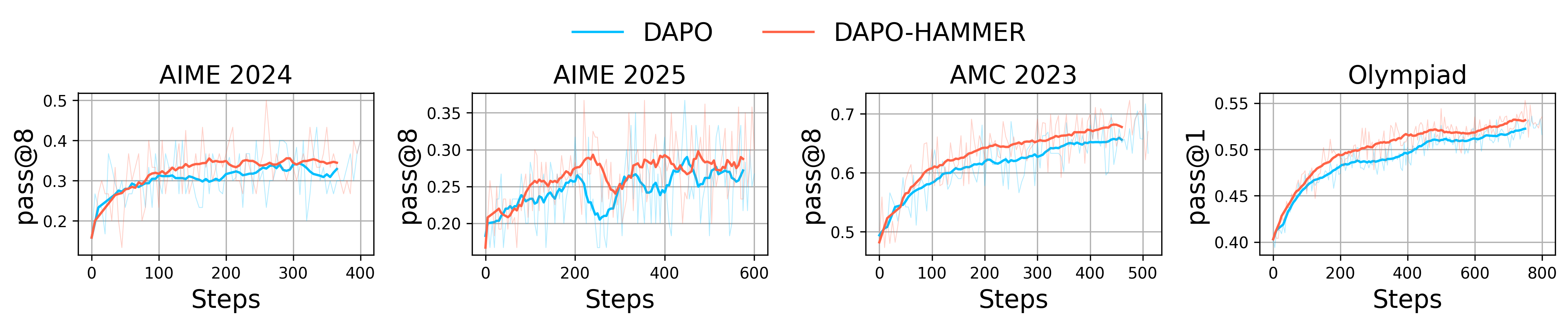}
  \vspace{-5mm}
  \caption{Validation of \textit{pass@k} over steps on Qwen3-1.7B DAPO (8192 context).}
  \label{fig: qwen3-1.7b dapo training dynamic}
  % \vspace{-2mm}
\end{figure}

\begin{figure}[htbp]
% \small
  \centering
  \vspace{-2mm}
  \includegraphics[width=\linewidth]{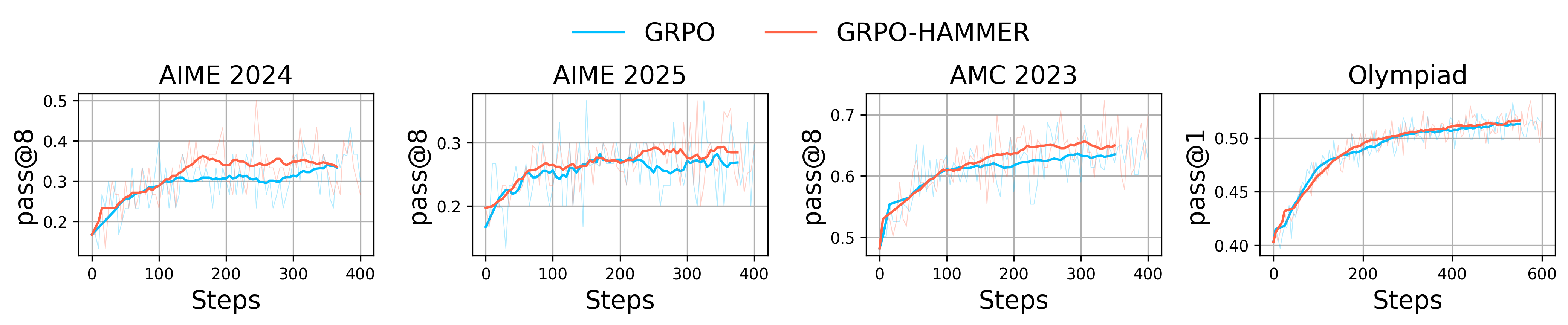}
  \vspace{-6mm}
 \caption{Validation of \textit{pass@k} over steps on Qwen3-1.7B GRPO (8192 context).}
  \label{fig: qwen3-1.7b grpo training dynamic}
  % \vspace{-2mm}
\end{figure}

% \vspace{-3mm}
\subsection{Ablation Study and Theoretical Validation}
\vspace{-1mm}
\paragraph{Zero-shot Performance} 
Table~\ref{tab: zero-shot performance} reports the zero-shot reasoning performance of the backbone models Qwen3-1.7B and Qwen3-4B on AIME 2024, AIME 2025, AMC 2023, and Olympiad. Combined with Table~\ref{tab: main results}, while RLVR yields about a 10\% improvement over the backbone models, \textit{HAMMER} achieves a 3--4\% gain solely through sample reordering.

\begin{table*}[b]
\small
\centering
\vspace{-5mm}
\caption{Zero-shot performance 
% on different backbone models (\(k=100\) 
for AIME 2024, AIME 2025, and AMC 2023; \(k=32\) for Olympiad).}
\label{tab: zero-shot performance}
\renewcommand{\arraystretch}{1.1}
\setlength{\tabcolsep}{3pt}
\scalebox{0.85}{
\setlength{\tabcolsep}{2.4mm}
\begin{tabular}{l c c c c c c c c c c c}
\toprule
\multirow{2}{*}{\textbf{Dataset}}
& \multicolumn{5}{c}{\textbf{Qwen3-1.7B}} & \multicolumn{5}{c}{\textbf{Qwen3-4B}} \\ 
\cmidrule(r){2-6} 
\cmidrule(r){7-11}
 & pass@1 & pass@10 & pass@k & cons@k & Avg. & pass@1 & pass@10 & pass@k & cons@k & Avg.  \\ 
\midrule
AIME 2024 & 15.7 & 39.3 & 58.7 & 20.0 & 33.4 & 26.3 & 42.7 & 55.3 & 40.0 & 41.1 \\
AIME 2025 & 18.7 & 26.7 & 28.7 & 35.3 & 23.3 & 17.0 & 28.7 & 35.3 & 23.3 & 26.1 \\
AMC 2023  & 47.7 & 62.5 & 72.4 & 50.6 & 58.3 & 52.8 & 67.6 & 76.3 & 60.2 & 64.2 \\
Olympiad & 42.1 & 53.3 & 55.7 & 43.6 & 48.6 & 45.5 & 55.3 & 58.3 & 46.5 & 51.4 \\
\bottomrule
\end{tabular}
}
\end{table*}

% \begin{table*}[t]
% \small
% \centering
% \caption{Zero-shot performance on different backbone models (\(k=100\) for AIME 2024, AIME 2025, and AMC 2023; \(k=32\) for Olympiad).}
% \label{tab: zero-shot performance}
% \renewcommand{\arraystretch}{1.1}
% \setlength{\tabcolsep}{3pt}
% \begin{tabular}{l c c c c c c c c c c c}
% \toprule
% \multirow{2}{*}{\textbf{Dataset}}
% & \multicolumn{5}{c}{\textbf{Qwen3-1.7B}} & \multicolumn{5}{c}{\textbf{Qwen3-4B}} \\ 
% \cmidrule(r){2-6} 
% \cmidrule(r){7-11}
%  & pass@1 & pass@10 & pass@k & cons@k & Avg. & pass@1 & pass@10 & pass@k & cons@k & Avg. & \\ 
% \midrule
% AIME 2024 & 15.7 & 39.3 & 58.7 & 20.0 & 33.4 & 26.3 & 42.7 & 55.3 & 40.0 & 41.1 \\
% AIME 2025 & 18.7 & 26.7 & 28.7 & 35.3 & 23.3 & 17.0 & 28.7 & 35.3 & 23.3 & 26.1 \\
% AMC 2023  & 47.7 & 62.5 & 72.4 & 50.6 & 58.3 & 52.8 & 67.6 & 76.3 & 60.2 & 64.2 \\
% Olympiad & 42.1 & 53.3 & 55.7 & 43.6 & 48.6 & 45.5 & 55.3 & 58.3 & 46.5 & 51.4 \\
% \bottomrule
% \end{tabular}
% \end{table*}

\vspace{-2mm}
\paragraph{Maximal Semantic Sample Order}
% While minimal semantic similarity ordering benefits reinforcement learning, we also examine maximal similarity ordering. 
% Following the main experiment, we study the validation of \textit{pass@8} on AIME 2024 with the maximal semantic Hamiltonian sample order on DeepScaleR with Qwen3-1.7B (8192 context), setting $M=-M$ so Algorithm~\ref{alg: hamilton cycle} still applies, akin to neighbor-based training \citep{neighbor_rl}. 
% As shown in Figure~\ref{fig: max hamilton}, \textit{HAMMER} remains superior. 
% Maximal similarity ordering outperforms random shuffle, but \textit{HAMMER}'s early training on diverse samples more robustly contracts the generalization bound, 
% reducing susceptibility to local optima even with later similar samples.
% \blue{
% While minimal semantic similarity ordering benefits RL, we also evaluate maximal similarity ordering. Using Qwen3-1.7B (8192 context) on AIME 2024, we validate $pass@8$ with maximal semantic Hamiltonian ordering on DeepScaleR, setting $M=-M$ so Algorithm~\ref{alg: hamilton cycle} applies, similar to neighbor-based training \citep{neighbor_rl}. 
% As shown in Figure~\ref{fig: max hamilton}, \textit{HAMMER} remains superior.
While minimal similarity ordering benefits RL, we also test maximal similarity ordering. On AIME 2024 with Qwen3-1.7B, we validate $pass@8$ using maximal semantic Hamiltonian ordering ((1)$M=-M$; (2) Algorithm~\ref{alg: hamilton cycle}), akin to neighbor-based training \citep{neighbor_rl}. As shown in Figure~\ref{fig: max hamilton}, \textit{HAMMER} remains superior.

\begin{figure}[t]
% \vspace{-10mm}
\centering
  \subfigure[Maximum Hamiltonian similarity ablation study.]{
    \includegraphics[width=0.45\linewidth]{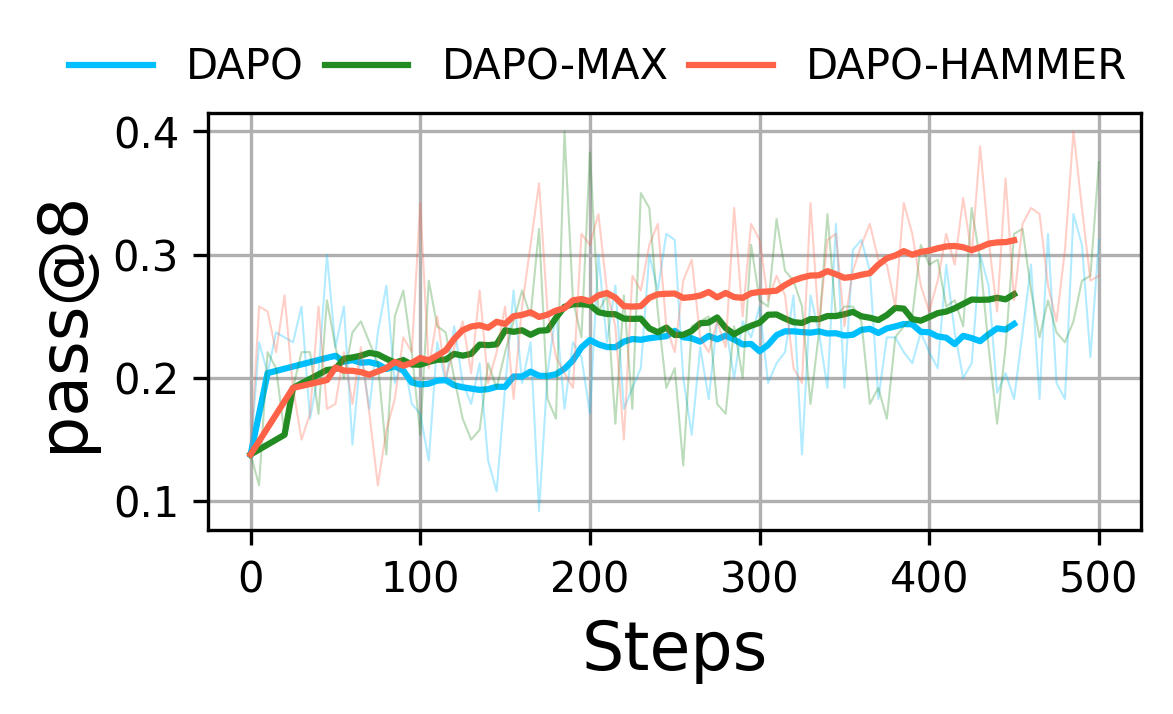} 
    \label{fig: max hamilton}
    \vspace{-2mm}
  }
  \hfill
  \subfigure[Difficulty-based ablation study.]{
    \includegraphics[width=0.45\linewidth]{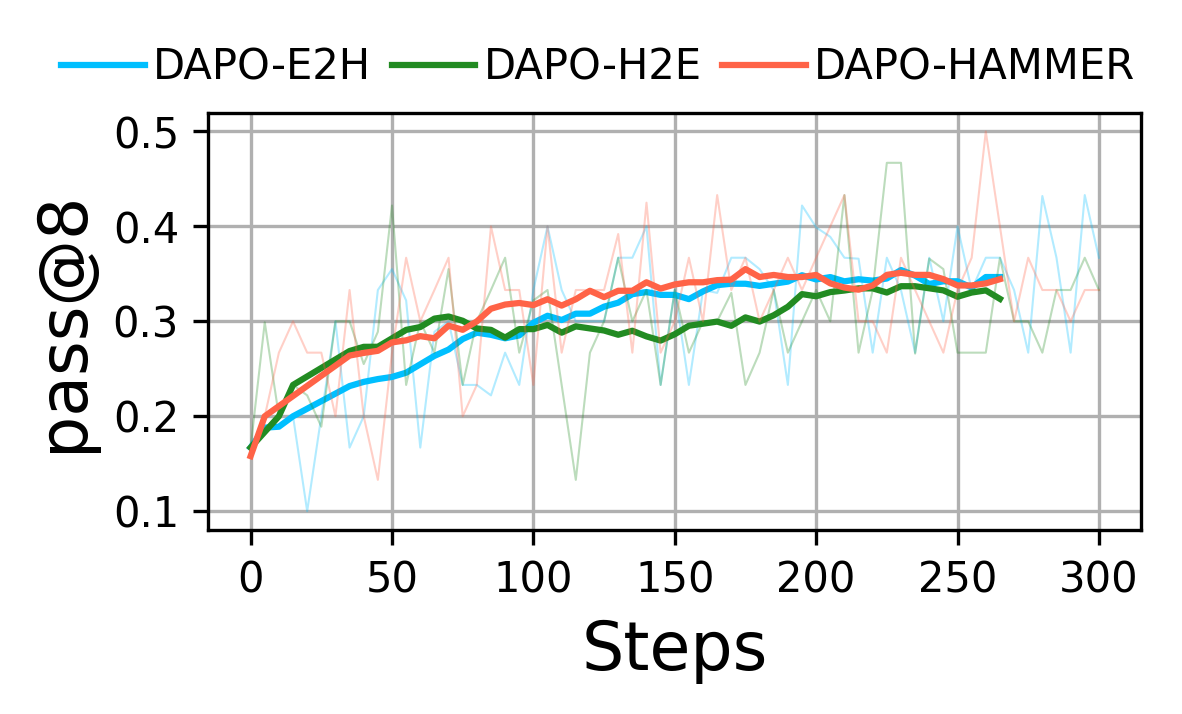}
    \label{fig: e2h}
    \vspace{-2mm}
  }
  \subfigure[Validation of \textit{pass@k} over steps on Qwen3-1.7B DAPO varing \textit{batch size} $\in \{16,32,64\}$.]{
  \includegraphics[width=\linewidth]{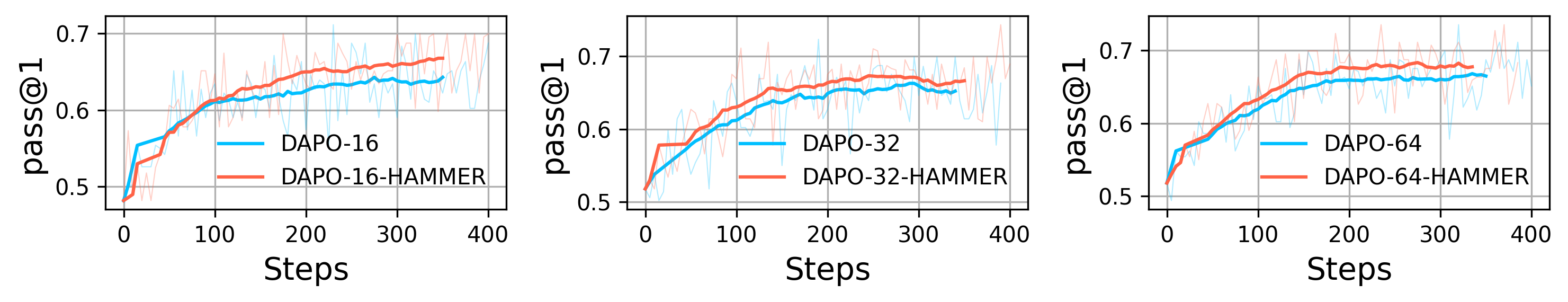}
  \label{fig: qwen3-1.7b dapo batchsize ablation}
  \vspace{-2mm}
  }
\vspace{-3mm}
\caption{Data order and batch size ablation study, where DAPO-MAX denote \textit{max semantic similarity} data order, 
DAPO-E2H and DAPO-H2E denote ``easy-to-hard'' and ``hard-to-easy'' data order.
}
\label{fig: max hamilton and e2h and batchsize}
\end{figure}

\begin{figure}[t]
% \small
  \centering
  % \vspace{-20mm}
  \includegraphics[width=\linewidth]{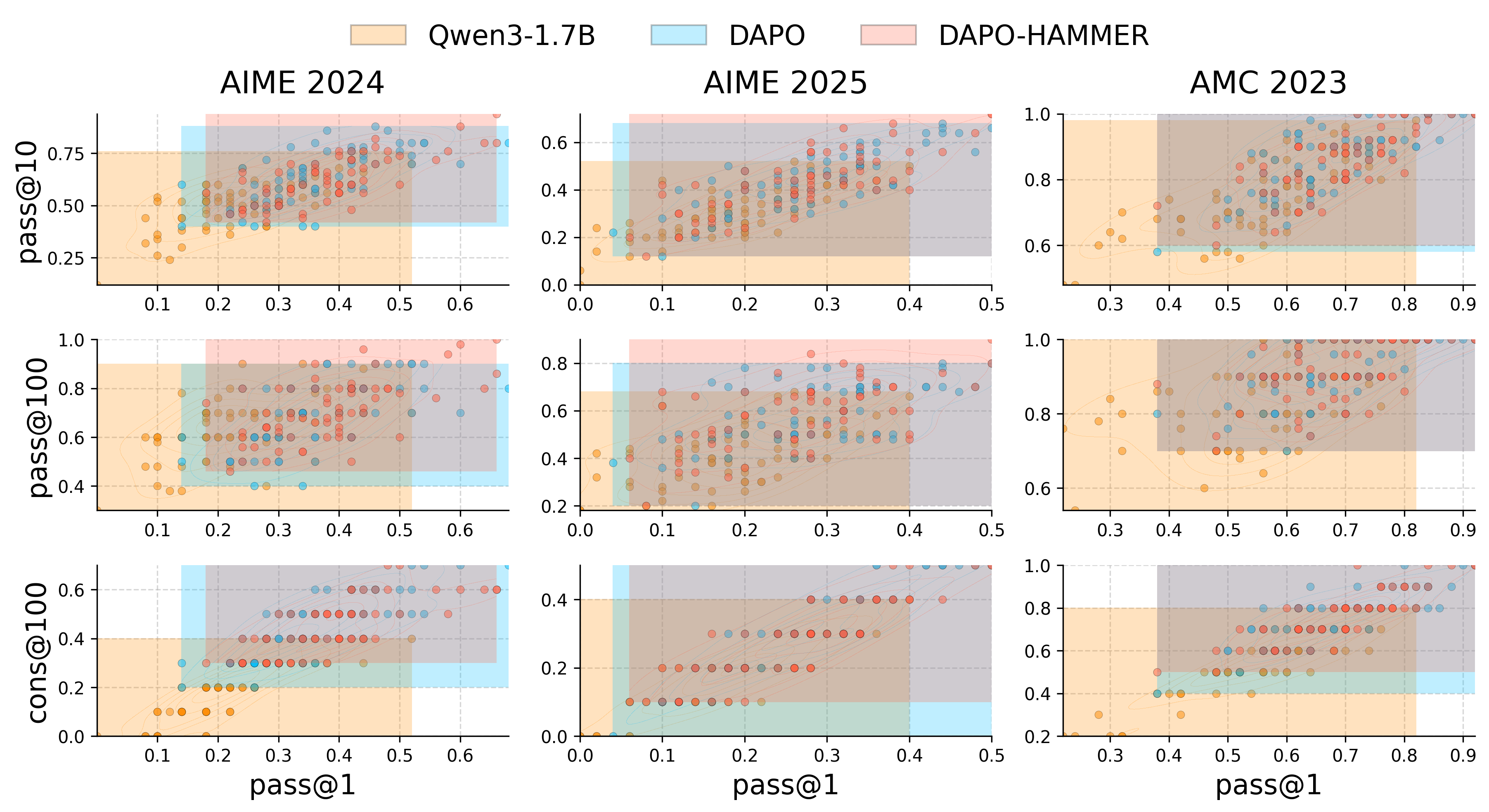}
  \vspace{-2mm}
    \caption{Distribution of metrics.}
  \vspace{-1mm}
  \label{fig: metric distribution}
\end{figure}

\vspace{-2mm}
\paragraph{Difficulty-based Training}
Many LLM curriculum RL approaches \cite{CL-E2H,qiu2025wisdom_E2H} use easy-to-hard (E2H) ordering. To compare, we train DAPO on AIME 2024 with different orders. As Figure~\ref{fig: e2h} shows, hard-to-easy (H2E) can match \textit{HAMMER} at its peak but later unstable, while E2H converges slower. \textit{HAMMER} achieves stable gains without costly difficulty estimation, requiring only a forward propagation of the backbone LLM (Section~\ref{sec: sentence embedding and similarity}).
% }

\vspace{-2mm}
\paragraph{Varying batch size}
% The training batch size significantly affects reinforcement learning. 
% Following recent RLVR guidelines \citep{gspo}, we set the train batch size equal to the mini-batch size and vary the train batch size to study \textit{HAMMER}'s effect. 
% As shown in Figure~\ref{fig: qwen3-1.7b dapo batchsize ablation}, increasing the batch size ($16, 32, 64$) generally improves performance. 
% \textit{HAMMER} benefits similarly from larger batch sizes and consistently outperforms the baselines. Larger batches expose the model to more diverse samples per iteration, promoting better generalization.
% \blue{
Training batch size is crucial in RLVR \citep{gspo}. We vary batch size (16,32,64) with train and mini-batch sizes set equal. As shown in Figure~\ref{fig: qwen3-1.7b dapo batchsize ablation}, larger batches improve performance. \textit{HAMMER} benefits similarly and consistently outperforms baselines by leveraging more diverse samples with bigger batch size.
% }

% \begin{figure}[htbp]
% % \small
%   \centering
%   \vspace{-10mm}
%   \includegraphics[width=\linewidth]{metric_distribution.png}
%   \vspace{-8mm}
%     \caption{Distribution of metrics.}
%   \vspace{-1mm}
%   \label{fig: metric distribution}
% \end{figure}

\vspace{-2mm}
\paragraph{Metric Distribution}
To examine how \textit{HAMMER} reshaping affects \textit{pass@1}, \textit{pass@10}, \textit{pass@100}, and \textit{cons@100}, we evaluated three versions of the Qwen3-1.7B model across AIME 2024, AIME 2025, and AMC 2023: the base model, the DAPO-enhanced model, and DAPO further augmented with \textit{HAMMER}.
As shown in Figure~\ref{fig: metric distribution}, at the same \textit{pass@1} level, \textit{HAMMER} consistently improves \textit{pass@k} for $k \ge 1$, shifting overall accuracy toward the upper-right. 

\vspace{-2mm}
\paragraph{Theory Validation}
\label{sec: theory validation}
To validate Theorem~\ref{thm: min DCS iff min similarity} and explaination in Example~\ref{ex: hammer advantages}, we compute $\mu_{\text{DCS}}$ and $\mu_{\text{NGM}}$ on DeepScaleR under varying subset ratios. 
As shown in Figure~\ref{fig: mu} show that \textit{HAMMER} prioritizes the most diverse samples with the same subset scale; 

% \brown{
% \clearpage
\section{Conclusion}
\vspace{-2mm}
We present \textit{HAMMER}, a novel schema that integrates semantic diversity into reinforcement for LLMs. 
By leveraging a minimum-semantic Hamiltonian path to define a curriculum sequence, \textit{HAMMER} stimulates early-stage model ``curiosity'', accelerates convergence, and improves training stability. 
Theoretically, we show that \textit{HAMMER} preserves the optimal policy while tightening generalization bounds through diverse sample, and that minimizing semantic similarity aligns with maximizing the dataset diversity measure $\mu_{\text{DCS}}$. 
Empirically, \textit{HAMMER} consistently enhances sample efficiency across multiple benchmarks, yielding 3\%--4\% average accuracy gains, demonstrating the effectiveness and generality of diversity-driven curriculum learning in LLM reinforcement training.
% }

% \section{Conclusion}
% \blue{
% We present \textit{HAMMER}, a curriculum RL schema that integrates semantic diversity for LLMs. 
% By constructing \textit{Hamiltonian Curiosity Order}, \textit{HAMMER} fosters early ``curiosity'', accelerates convergence, and improves stability. 
% We prove it preserves the optimal policy while tightening generalization bounds via diverse sample selection, with minimal similarity equivalent to maximizing dataset diversity $\mu_{\text{DCS}}$. 
% Empirically, \textit{HAMMER} achieves 3\%–4\% average accuracy gains across benchmarks, highlighting the effectiveness of \textit{HAMMER} in LLM reinforcement training.
% }

\newpage
\section*{Reproducibility Statement}
% \vspace{-1mm}
Algorithms, data and experimental details are included in the anonymous repository 
\url{https://anonymous.4open.science/r/HAMMER-B17F}
and provided as supplementary material. 
% \vspace{-3mm}
\section*{Ethics Statement}
% \vspace{-1mm}
All datasets used are publicly available with appropriate licenses. Our method is designed to improve LLM training efficiency, and should be used responsibly. We do not expect our work to produce harmful content, and encourage ethical deployment in line with the ICLR Code of Ethics.
% \vspace{-3mm}
\section*{The Use of Large Language Models (LLMs)}
% \vspace{-1mm}
Although the paper proposes a method to improve the training efficiency of LLMs. LLMs were used only to aid and polish the writing. No part of the research, method, or experiments relied on LLMs. The authors take full responsibility for the paper.

\bibliography{iclr2026_conference}
\bibliographystyle{iclr2026_conference}

\newpage
\appendix
\section{Experimental Setup}
\label{app: experimental setup}
\paragraph{Datasets}
All datasets are detailed in Table~\ref{tab: dataset details} in Appendix~\ref{app: dataset details}.
We evaluate our method on four benchmark datasets for mathematical problem solving. 
AIME 2024 (30 problems), AIME 2025 (30 problems),
AMC (83 problems) and
Olympiad (675 problems). 
All models are trained on the DeepScaleR (40,315 problems), which provides high-quality synthetic reasoning traces designed to enhance step-by-step mathematical reasoning. 
In \textit{HAMMER}, DeepScaleR is ordered by minimal similarity using Algorithm~\ref{alg: hamilton cycle}. 
The embedding of $\mathcal{X}$ is computed by mean pooling.
Our experiments show that varying $\eta$ has little impact on overall performance, so we fix $\eta=3$.
% all the time.

\paragraph{Baselines}
For the main experiments, we use DAPO and GRPO trained on randomly shuffled samples as baselines, 
while \textit{HAMMER} differs in the sample ordering. 
As shown in Table~\ref{tab: main results}, the baselines include Qwen3-1.7B-DAPO, Qwen3-1.7B-GRPO, Qwen3-4B-DAPO, and Qwen3-4B-GRPO.

\paragraph{Training} 
In our experiment, we adapt Qwen3-1.7B and Qwen3-4B \citep{qwen3_tech} as the backbone model, 
and train on \textit{verl} \citep{VeRL} through GRPO and DAPO.
Models for main experiment were trained with a batch size of 16 (including mini-batch size). The maximum prompt length is set to 1024 tokens, and the maximum response length is 8192 tokens. 
For the training hyper-parameters, learning rate is fixed at $1\times 10^{-6}$ without warmup step. For GRPO we adopt KL regularization (coefficient $\beta=0.001$). For DAPO, we set $\varepsilon_{\text{low}}=0.2$ and $\varepsilon_{\text{high}}=0.28$ and token-level policy gradient loss, and dynamically filter samples by accuracy during training. 
Each training step generates 16 rollouts, while validation (in dynamic experiments) uses 8 rollouts. The rollout temperature is 1.2, and the validation temperature is 0.6. 
For the reward, if the $i$-th rollout passes verification, it is assigned a positive reward $r_i = 1$; otherwise, it receives $r_i = 0$. 

\paragraph{Evaluation}
To evaluate LLM performance, we set temperature to $1.2$, with top-$p=0.95$ and top-$k=20$ with 8192 context length. For AIME 2024, AIME 2025, and AMC 2023, we sample $1$, $10$, and $100$ responses 10 times and report average $pass@k$ $(k \in {1,10,100})$ and $cons@100$, measuring solution accuracy and response consistency \citep{deepseek-r1}. For Olympiad, due to its larger size, we evaluate only $pass@1$, $pass@10$, $pass@32$, and $cons@32$, which are sufficient for reliable estimation.

% \subsection{Experiment Results}

\section{Proof of Section~\ref{sec: theoretical analysis}}
\label{app: proof of theorems}
\begin{lemma}[Vapnik-Chervonenkis Inequality]
\label{lem: vc inequality}
    For a policy class $\Pi$ with VC dimension $d$ and $n$ i.i.d. samples $\mathcal{S}$, the following inequality holds
    % \vspace{-1mm}
    $$
    \forall \varepsilon \in \mathbb{R}^+, \quad \mathbb{P}\left( \sup_{\pi \in \Pi} \left| \hat{\mathcal{R}}_{\mathcal{S}}(\pi) - \mathcal{R}(\pi) \right| \geq \varepsilon \right) \leq 2 \left( \frac{en}{d} \right)^d e^{-n\varepsilon^2/2}.
    $$
    Setting $\mathbb{P}$ to $\delta$ and solving for $\varepsilon$ yields the generalization bound.
% \vspace{-1mm}
\begin{equation*}
    % \label{eq: generation bound}
    \sup_{\pi \in \Pi} \left| \hat{\mathcal{R}}_{\mathcal{S}}(\pi) - \mathcal{R}(\pi) \right| \leq C\sqrt{\frac{d\log(n/d) + \log(1/\delta)}{n}}, \quad
\text{where $C > 0$ is some constant.}
\end{equation*}
\end{lemma}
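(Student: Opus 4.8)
The plan is to prove the Vapnik--Chervonenkis inequality in two parts: first the probabilistic tail bound, and then the routine algebraic inversion that recovers the generalization bound with free constant $C$. The tail bound is the classical uniform deviation estimate, so I would follow the standard four-step route: symmetrization by a ghost sample, randomization with Rademacher signs, reduction of the supremum over the infinite class $\Pi$ to a finite effective class through the growth function, and finally a union bound combined with Hoeffding's inequality, closed off by the Sauer--Shelah lemma.

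First I would introduce a ghost sample $\mathcal{S}'=\{x_i'\}_{i=1}^n$ drawn i.i.d.\ from the same law as $\mathcal{S}$ and independent of it, and establish the symmetrization lemma
\[
\mathbb{P}\Big(\sup_{\pi\in\Pi}|\hat{\mathcal{R}}_{\mathcal{S}}(\pi)-\mathcal{R}(\pi)|\ge\varepsilon\Big)\le 2\,\mathbb{P}\Big(\sup_{\pi\in\Pi}|\hat{\mathcal{R}}_{\mathcal{S}}(\pi)-\hat{\mathcal{R}}_{\mathcal{S}'}(\pi)|\ge\tfrac{\varepsilon}{2}\Big).
\]
The mechanism is that for the (random) witness policy nearly attaining the supremum on $\mathcal{S}$, the ghost empirical risk $\hat{\mathcal{R}}_{\mathcal{S}'}(\pi)$ concentrates around $\mathcal{R}(\pi)$ with probability at least $1/2$ by Chebyshev/Hoeffding on a single policy, so replacing the population risk by the ghost empirical risk costs only the factor $2$ and halves $\varepsilon$.

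Next, since $\mathcal{S}$ and $\mathcal{S}'$ are i.i.d., exchanging the pair $(x_i,x_i')$ leaves the joint law invariant, so I would insert Rademacher signs $\sigma_i\in\{-1,+1\}$ and condition on the pooled $2n$-point multiset. Conditioned on these points, the loss profiles realize at most $S_\Pi(2n)$ distinct patterns, where $S_\Pi$ is the growth function, so the supremum over $\Pi$ collapses to a maximum over a finite effective class. For each fixed effective policy the signed average $\frac1n\sum_i\sigma_i(\mathcal{L}(\pi,x_i)-\mathcal{L}(\pi,x_i'))$ is a bounded, zero-mean, independent sum, and Hoeffding's inequality yields an exponential tail of the form $e^{-cn\varepsilon^2}$. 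A union bound over the $S_\Pi(2n)$ patterns together with the Sauer--Shelah bound $S_\Pi(2n)\le(en/d)^d$ then combine into a bound of the shape $(en/d)^d e^{-cn\varepsilon^2}$, which reproduces the stated $2(en/d)^d e^{-n\varepsilon^2/2}$ with any numerical discrepancy absorbed into $C$. Taking the stated tail as given, setting its right-hand side equal to $\delta$ reduces to solving $n\varepsilon^2/2=d\log(en/d)+\log(2/\delta)$ for $\varepsilon$; folding the additive $d=d\log e$ and $\log 2$ terms into the constant then yields $\rho=C\sqrt{(d\log(n/d)+\log(1/\delta))/n}$.

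The main obstacle is the symmetrization step: making it rigorous requires carefully decoupling the witness policy (which depends on $\mathcal{S}$) from the ghost sample and verifying the ``probability at least $1/2$'' concentration event uniformly over whichever policy realizes the supremum. A secondary subtlety is that the statement invokes a VC dimension $d$ for a policy class carrying a real-valued bounded loss $\mathcal{L}$ rather than a $\{0,1\}$-valued concept class; I would resolve this by applying the combinatorial reduction to the thresholded sublevel sets of $\mathcal{L}$ (equivalently, bounding the relevant shattering number of the induced set system), so that Sauer--Shelah and the growth-function argument apply verbatim.
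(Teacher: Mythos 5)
Your proposal is correct, but it does considerably more than the paper, whose ``proof'' of this lemma is essentially a citation: the paper only remarks that the result relies on Hoeffding's inequality (deferring to the cited reference for the tail bound) and then performs the same algebraic inversion you describe, setting the tail probability to $\delta$ and solving for $\varepsilon$. You instead reconstruct the classical argument in full --- symmetrization by a ghost sample, Rademacher randomization, collapse of the supremum to the growth function, union bound plus Hoeffding, and Sauer--Shelah --- which is the standard route to exactly this statement and is sound as a sketch. Two of your own caveats are worth keeping: the symmetrization step needs a mild condition such as $n\varepsilon^2 \geq 2$ for the ``probability at least $1/2$'' event, and the growth function should be evaluated at $2n$ (giving $(2en/d)^d$ rather than $(en/d)^d$), both of which are harmlessly absorbed into the constant $C$ as you note. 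Your observation that the lemma as stated applies a VC dimension to a class with a real-valued bounded loss, and that one must pass to thresholded sublevel sets (i.e.\ pseudo-dimension) for the combinatorial step to apply, is a genuine subtlety that the paper silently glosses over; your resolution is the standard and correct one.
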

\begin{proof}
The proof of Lemma~\ref{lem: vc inequality} relies on \textit{Hoeffding's inequality} \citep{proof_of_vc_inequality}.
Setting $\mathbb{P}$ to $\delta$ and solving for $\varepsilon$ yields the generalization bound (shorten $C\sqrt{\frac{d\log(n/d) + \log(1/\delta)}{n}}$ as $\rho$)
\end{proof}

\textbf{Theorem~\ref{thm: muti-stages training support condition}.}
\textit{
% \label{thm: muti-stages training support condition}
Given a subset $\mathcal{S} \subset \mathcal{X}$ of $n$ samples, let $\pi^*$ be the optimal policy on $\mathcal{X}$. There exists some $\gamma$ (i.e., $\gamma=2\rho$) such that
$
\pi^* \in \Pi_{\mathcal{S}}.
$
}

\begin{proof}
Let $\gamma = 2\rho$. From Inequality~\ref{eq: generation bound}, we have the uniform bound
% \vspace{-1mm}
\begin{equation}
\label{eq: uniform bound}
\forall \pi \in \Pi, \left| \hat{\mathcal{R}}_{\mathcal{S}}(\pi) - \mathcal{R}(\pi) \right| \leq \rho.
\end{equation}
Particularly, the optimal policy $\pi^*$ yields
% \vspace{-1mm}
\begin{equation}
\label{eq: uniform bound 1}
\hat{\mathcal{R}}_{\mathcal{S}}(\pi^*) \leq \mathcal{R}(\pi^*) + \rho.
\end{equation}

Moreover, applying the uniform bound to all policies and taking the minimum
% \vspace{-1mm}
\begin{equation}
\label{eq: uniform bound 2}
\hat{\mathcal{R}}_{\mathcal{S}}^* = \min_{\pi\in\Pi} \hat{\mathcal{R}}_{\mathcal{S}}(\pi) 
\geq 
\min_{\pi\in\Pi} [\mathcal{R}(\pi) - \rho ]
= \min_{\pi\in\Pi} \mathcal{R}(\pi) - \rho 
= \mathcal{R}(\pi^*) - \rho.
\end{equation}

Combining Inequality~\ref{eq: uniform bound 1} and \ref{eq: uniform bound 2}, we obtain
% \vspace{-1mm}
$$
\hat{\mathcal{R}}_{\mathcal{S}}(\pi^*) - \hat{\mathcal{R}}_{\mathcal{S}}^* 
\leq \left( \mathcal{R}(\pi^*) + \rho \right) - \left( \mathcal{R}(\pi^*) - \rho \right) 
= 2\rho.
$$

Thus, $\hat{\mathcal{R}}_{\mathcal{S}}(\pi^*) \leq \hat{\mathcal{R}}_{\mathcal{S}}^* + \gamma$, which by Definition~\ref{def: induced policy subset} implies $\pi^* \in \Pi_{\mathcal{S}}$.
\end{proof}

\textbf{Theorem~\ref{thm: local risk optimization}.}
\textit{
% \label{thm: local risk optimization}
For a subset $\mathcal{S}$ of $n$ samples, when $\gamma=2\rho$,
% \vspace{-1mm}
$$
\forall \pi \in \Pi_{\mathcal{S}}, \Delta_{\pi} \leq \mathcal{O}\left(\sqrt{\frac{d\log(n/d) + \log(1/\delta)}{n}}\right).
$$
}

\begin{proof}
    $\forall \pi \in \Pi_{\mathcal{S}}$, by Inequality~\ref{eq: uniform bound} and Definition~\ref{def: induced policy subset}, we have 
    \vspace{-1mm}
    $$
    \mathcal{R}(\pi) \leq \hat{\mathcal{R}}_{\mathcal{S}}(\pi) + \rho \leq 
    \left(
    \hat{\mathcal{R}}_{\mathcal{S}}^* + \gamma
    \right) + \rho = 
    (\hat{\mathcal{R}}(\pi^*) + \gamma) + \rho
    = \hat{\mathcal{R}}(\pi^*) + 3\rho
    $$
    \vspace{-1mm} 
    Thus, we deduce $\Delta_{\pi} = |\mathcal{R}(\pi)-\mathcal{R}(\pi^*)| \leq 3\rho = \mathcal{O}
    \left(
    \sqrt{\frac{d\log(n/d) + \log(1/\delta)}{n}}
    \right)
    $.
\end{proof}

\textbf{Theorem~\ref{thm: min DCS iff min similarity}.}
\textit{
Given a dataset $\mathcal{X} = \{x_i\}_{i=1}^n$ with embeddings $\{e_i\}_{i=1}^n$, let $\mathcal{S} \subset \mathcal{X}$ and $|\mathcal{S}|=m$, $M(\mathcal{S}) \in \mathbb{R}^{m\times m}$ be the semantic cosine similarity matrix of $\mathcal{S}$ with $M_{ij}(\mathcal{S}) = \langle e_i, e_j \rangle$, then we have
    $$
    \max_{\mathcal{S}\subset \mathcal{X}, |\mathcal{S}|=m} \mu_{\text{DCS}}(\mathcal{S}) \iff \min_{\mathcal{S}\subset \mathcal{X}, |\mathcal{S}|=m} \sum_{i=1}^m\sum_{j=1}^m \mathbb{I}(i\neq j)\cdot M_{ij}(\mathcal{S}).
    $$
}

\begin{proof}
    First, clarify the softmax convention: let $\operatorname{softmax}$ denote the row-wise softmax applied to matrix $M_{n \times n}$, i.e.
    $$
        \mathbb{P}_{ij}(\mathcal{S})
        \;=\;
        \frac{e^{M_{ij}(\mathcal{S})}}{\sum_{k=1}^m e^{M_{ik}(\mathcal{S})}}
        \qquad\text{for } i,j=1,\dots,m.
    $$
    By definition $\mu_{\text{DCS}}(\mathcal{S})=\textbf{tr}(\mathbb{P}(\mathcal{S}))=\sum_{i=1}^m \mathbb{P}_{ii}(\mathcal{S})$.
    Using the fact that each row of $\mathbb{P}(\mathcal{S})$ sums to $1$, we have for any fixed $\mathcal{S}$
    $
            \mathbb{P}_{ii}(\mathcal{S}) \;=\; 1 - \sum_{j\neq i} \mathbb{P}_{ij}(\mathcal{S}),
    $
    and therefore
    \[
        \mu_{\text{DCS}}(\mathcal{S})
        \;=\;
        \sum_{i=1}^m \mathbb{P}_{ii}(\mathcal{S})
        \;=\;
        m - \sum_{i=1}^m\sum_{j\neq i}\mathbb{P}_{ij}(\mathcal{S}).
    \]
    Hence maximizing $\mu_{\text{DCS}}(\mathcal{S})$ is equivalent to minimizing the total off-diagonal mass of $\mathbb{P}(\mathcal{S})$:
    \[
        \max_{\mathcal{S}\subset\mathcal{X},\,|\mathcal{S}|=m}\mu_{\text{DCS}}(\mathcal{S})
        \iff
        \min_{\mathcal{S}\subset\mathcal{X},\,|\mathcal{S}|=m}\sum_{i=1}^m\sum_{j\neq i}\mathbb{P}_{ij}(\mathcal{S}).
    \]

    Next, relate the off-diagonal entries $\mathbb{P}_{ij}(\mathcal{S})$ to the original similarity values $M_{ij}(\mathcal{S})$.
    For each fixed row $i$, $\mathbb{P}_{ij}(\mathcal{S})$ is a strictly increasing function of $M_{ij}(\mathcal{S})$ (holding the other entries in the same row fixed).
    In particular, increasing any off-diagonal similarity $M_{ij}$ (with other row-$i$ entries unchanged) strictly increases the corresponding $\mathbb{P}_{ij}$ and thus increases the row's off-diagonal mass $\sum_{j\ne i}\mathbb{P}_{ij}$.
    Consequently, a subset $\mathcal{S}$ that yields smaller off-diagonal similarity values $M_{ij}$  will also yield smaller total off-diagonal mass $\sum_{i\ne j}\mathbb{P}_{ij}$, and hence larger $\mu_{\text{DCS}}(\mathcal{S})$.

    Combining the two steps above, we obtain the stated equivalence at the level of optimization over subsets:
    \[
        \max_{\mathcal{S}\subset \mathcal{X},\, |\mathcal{S}|=m} \mu_{\text{DCS}}(\mathcal{S})
        \quad\Longleftrightarrow\quad
        \min_{\mathcal{S}\subset \mathcal{X},\, |\mathcal{S}|=m} \sum_{i=1}^m\sum_{j=1}^m \mathbb{I}(i\neq j)\cdot M_{ij}(\mathcal{S}).
    \]
\end{proof}

\newpage
\section{Supplementary Tables and Figures}
\begin{table}[h]
% \small
\centering
\caption{Comparison of $\mu_{\text{DCS}}$ values for different prefix subsets under two orders.}
\label{tab:mu_dcs_comparison}
\begin{tabular}{c ccccc}
\toprule
\textbf{Prefix Subset} & $\{x_1\}$ & $\{x_1,x_2\}$ & $\{x_{i}\}_{i=1}^3$ & $\{x_{i}\}_{i=1}^4$ & $\{x_{i}\}_{i=1}^5$ \\
\midrule
$\mu_{\text{DCS}}$ of $\mathcal{P}^*$ & 1.00 & 3.43 & 5.59 & 6.78 & 8.35 \\
$\mu_{\text{DCS}}$ of $\mathcal{P}_1$ & 1.00 & 2.49 & 3.96 & 5.24 & 8.35 \\
\bottomrule
\end{tabular}
\end{table}

% \begin{table*}[h]
% \small
% \centering
% \caption{Zero-shot performance on different backbone models (\(k=100\) for AIME 2024, AIME 2025, and AMC 2023; \(k=32\) for Olympiad).}
% \label{tab: zero-shot performance}
% \renewcommand{\arraystretch}{1.1}
% \setlength{\tabcolsep}{3pt}
% \begin{tabular}{l c c c c c c c c c c c}
% \toprule
% \multirow{2}{*}{\textbf{Dataset}}
% & \multicolumn{5}{c}{\textbf{Qwen3-1.7B}} & \multicolumn{5}{c}{\textbf{Qwen3-4B}} \\ 
% \cmidrule(r){2-6} 
% \cmidrule(r){7-11}
%  & pass@1 & pass@10 & pass@k & cons@k & Avg. & pass@1 & pass@10 & pass@k & cons@k & Avg. & \\ 
% \midrule
% AIME 2024 & 15.7 & 39.3 & 58.7 & 20.0 & 33.4 & 26.3 & 42.7 & 55.3 & 40.0 & 41.1 \\
% AIME 2025 & 18.7 & 26.7 & 28.7 & 35.3 & 23.3 & 17.0 & 28.7 & 35.3 & 23.3 & 26.1 \\
% AMC 2023  & 47.7 & 62.5 & 72.4 & 50.6 & 58.3 & 52.8 & 67.6 & 76.3 & 60.2 & 64.2 \\
% Olympiad & 42.1 & 53.3 & 55.7 & 43.6 & 48.6 & 45.5 & 55.3 & 58.3 & 46.5 & 51.4 \\
% \bottomrule
% \end{tabular}
% \end{table*}

\begin{figure}[h]
  \centering
  % \vspace{-6mm}
    \includegraphics[width=0.6\linewidth]{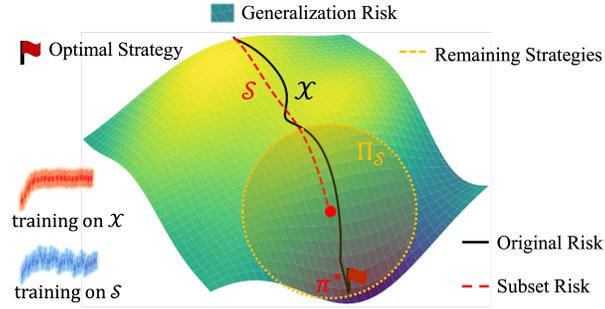}
\caption{Illustration of the benefits of introducing diversity early in training.}
  \label{fig: example of diversified sample training}
  % \vspace{-3mm}
\end{figure}

% \begin{figure}[htbp]
% % \vspace{-4mm}
% \centering
%   \subfigure[Maximum Hamiltonian similarity ablation study.]{
%     \includegraphics[width=0.48\linewidth]{qwen3-1.7b-dapo-with-mh.png} 
%     \label{fig: max hamilton}
%   }
%   \hfill
%   \subfigure[Difficulty-based ablation study.]{
%     \includegraphics[width=0.48\linewidth]{qwen3-1.7b-dapo-E2H.png}
%     \label{fig: e2h}
%   }
% % \vspace{-4mm}
% \caption{Data order ablation study, where DAPO-MAX denote \textit{max semantic similarity} data order, 
% DAPO-E2H and DAPO-H2E denote ``easy-to-hard'' and ``hard-to-easy'' data order.
% }
% \label{fig: max hamilton and e2h and batchsize}
% \end{figure}

\begin{figure}[H]
% \vspace{-3mm}
  \subfigure[$\mu_{\text{DCS}}$ on DeepScaleR varying $|\mathcal{S}|/|\mathcal{X}|$.] {
    \centering
    \includegraphics[width=0.45\linewidth]{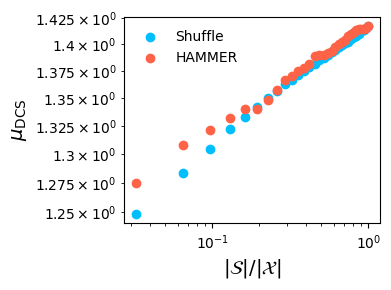} 
    \label{fig: mu_DCS}
  }
  % \hfill
  % \hspace{2mm}
  \subfigure[
      $\mu_{\text{NGM}}$ on DeepScaleR varying $|\mathcal{S}|/|\mathcal{X}|$.
    ]{
    \centering
    \includegraphics[width=0.43\linewidth]{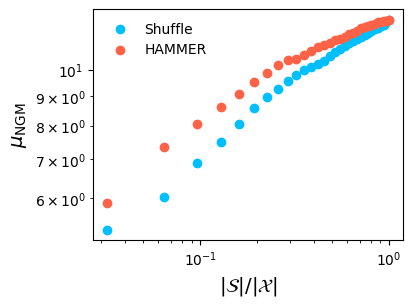}
    \label{fig: mu_NGM}
  }
  % \vspace{-3mm}
  \caption{
  $\mu$ on DeepScaleR with different subset ratios $|\mathcal{S}|/|\mathcal{X}|$.
  }
  \label{fig: mu}
\end{figure}

\newpage
\section{Related Works}
\paragraph{RL for LLM Reasoning}
Reinforcement Learning (RL) plays a critical role in LLM post-training. Traditional methods such as Reinforcement Learning from Human Feedback (RLHF) train a reward model to compare response preferences and optimize policy via algorithms like Proximal Policy Optimization (PPO) \citep{ppo}. To simplify, Direct Preference Optimization (DPO) \citep{RafailovSMMEF23} was proposed, which directly optimizes policy using pairwise preference data without reward model. A key development is Reinforcement Learning with Verifiable Rewards (RLVR), which offer feedback based on outcome correctness or verifiable reward, significantly enhancing LLMs' reasoning in mathematics and programming \citep{Survey_RLLM}. OpenAI’s o1 \citep{OpenAI-o1} advanced reasoning, while DeepSeek-R1 \citep{deepseek-r1}introduced zero-RL by eliciting model's slow-thinking capability. These advances spurred Large Reasoning Models (LRMs) like Kimi 1.5 \citep{kimi-1.5} and QwQ \citep{qwq32b}. A key RLVR algorithm, Group Relative Policy Optimization (GRPO), extends PPO by sampling multiple responses to compute group-relative advantage, yielding major gains \citep{deepseekmath_grpo}. It inspired variants like DAPO \citep{dapo} , VAPO \citep{Zhang_learning2reason_under_off_policy-vapo} and GSPO \citep{gspo}.

\paragraph{Curriculum RL}
Curriculum Learning (CL) draws inspiration from human education by structuring learning from simple to complex concepts \citep{CL_Survey}. Recent studies have explored curriculum-based reinforcement learning to improve reasoning and generalization in large language models \citep{bae2025onlinedifficultyfilteringreasoning,zeng2025simplerlzooinvestigatingtamingzero}. Some approaches assign difficulty levels to Chain-of-Thought (CoT) annotations \citep{qiu2025wisdom_E2H,CL-E2H}, filter out over-simple or over-challenging examples, or maintain a balanced distribution of task difficulties. Other methods use manually designed curricula that transition from easy to hard tasks after fixed training intervals \citep{Logic-RL,kimi-1.5}. 
However, setting courses according to the difficulty relies on the based model and requires evaluating based model's \textit{pass@k} performance. In this paper, rather than simply categorizing tasks by difficulty, we design the curriculum from diversity.

\paragraph{Coreset Selection}
Coreset selection (CS) accelerates training by selecting a compact, representative subset of samples \citep{_33_33_,_58_58_,_60_60_,_65_65_,_15_15_,_38_38_,task_cs_ft}. While effective for pruning redundancy, CS inherently faces performance bottlenecks \citep{cs_bottleneck}. In contrast, our Hamiltonian ordering adopts a curriculum learning view: it leverages semantic diversity to structure training, exposing the model to varied samples early on to promote more curious and stable learning. Unlike CS’s focus on reduction, our approach complements it by emphasizing diversity-driven guidance.

\paragraph{Data Diversity}
The evaluation of text diversity can be categorized into three approaches. 
(1) $N$-gram based methods \citep{n-gram_based_methods} utilize lexical statistics through metrics like distinct-$n$ \citep{distinct-n}, self-BLEU \citep{self-BLEU}, and ROUGE-L \citep{wang-etal-2023-self-instruct,padmakumar2024doeswritinglanguagemodels} to efficiently quantify surface-level variation. (2) Reference-based methods \citep{GANs} such as MAUVE \citep{MAUVE} quantify diversity by measuring the distributional divergence between generated texts and a high-quality reference dataset;
(3) Transformation-based methods \citep{transformer-based} employ learned representations (e.g., from language models) to capture multi-faceted diversity (semantic, syntactic, and stylistic) and summarize it via techniques like clustering \citep{Du_Black} or eigenvalue computation for VendiScore \citep{vendiscore} and its extensions RKE \citep{RKE} and FKEA \citep{FKEA}, which offer superior flexibility and comprehensiveness but suffer from higher computational complexity.

\newpage
\section{Dataset Details}
\label{app: dataset details}
\begin{table}[h]
\small
\centering
\renewcommand{\arraystretch}{1.2}
\begin{tabularx}{\textwidth}{l c X X}
\toprule
\textbf{Dataset} & \textbf{Size} & \textbf{URL} & \textbf{Description} \\
\midrule
\midrule
AIME 2024 & 30 & \url{https://huggingface.co/datasets/HuggingFaceH4/aime_2024} 
& The dataset consists of 30 problems from the 2024 AIME I\&II tests. \\
\midrule
AIME 2025 & 30 & \url{https://huggingface.co/datasets/opencompass/AIME2025} 
& This dataset contains problems from the American Invitational Mathematics Examination (AIME) 2025-I\&II. \\
\midrule
AMC 2023 & 83 & \url{https://huggingface.co/datasets/math-ai/amc23} 
& All 83 problems come from AMC 2023 competition. \\
\midrule
Olympiad & 675 & \url{https://huggingface.co/datasets/math-ai/olympiadbench} 
& A challenging benchmark for promoting AGI with olympiad-level problems. \\
\midrule
DeepScaleR & 40315 & \url{https://huggingface.co/datasets/agentica-org/DeepScaleR-Preview-Dataset} & The dataset consists of approximately 40000 unique mathematics problem-answer pairs compiled from:
AIME (American Invitational Mathematics Examination) problems (1984-2023) and
AMC (American Mathematics Competition) problems (prior to 2023).\\
\bottomrule
\end{tabularx}
\caption{Summary of datasets with URLs and descriptions. 
}
\label{tab: dataset details}
\end{table}

\end{document}